\begin{document}

\newtheorem{thm}{Theorem}[section]
\newtheorem{cor}[thm]{Corollary}
\newtheorem{lem}[thm]{Lemma}{\rm}
\newtheorem{prop}[thm]{Proposition}
\newtheorem{rem}[thm]{Remark}

\newtheorem{defn}[thm]{Definition}{\rm}
\newtheorem{assumption}[thm]{Assumption}
\newtheorem{ex}{Example}
\numberwithin{equation}{section}
\def\la{\langle}
\def\ra{\rangle}
\def\glexe{\leq_{gl}\,}
\def\glex{<_{gl}\,}
\def\e{{\rm e}}

\def\x{\mathbf{x}}
\def\P{\mathbf{P}}
\def\D{\mathbf{D}}
\def\h{\mathbf{h}}
\def\by{\mathbf{y}}
\def\bz{\mathbf{z}}
\def\F{\mathcal{F}}
\def\R{\mathbb{R}}
\def\T{\mathbf{T}}
\def\N{\mathbb{N}}
\def\D{\mathbf{D}}
\def\V{\mathbf{V}}
\def\U{\mathbf{U}}
\def\K{\mathbf{K}}
\def\Q{\mathbf{Q}}
\def\H{\mathbf{H}}
\def\M{\mathbf{M}}
\def\oM{\overline{\mathbf{M}}}
\def\O{\mathbf{O}}
\def\C{\mathbb{C}}
\def\P{\mathbf{P}}
\def\Z{\mathbb{Z}}
\def\H{\mathcal{H}}
\def\A{\mathbf{A}}
\def\V{\mathbf{V}}
\def\AA{\overline{\mathbf{A}}}
\def\BB{\mathbf{B}}
\def\c{\mathbf{C}}
\def\L{\mathbf{L}}
\def\bS{\mathbf{S}}
\def\H{\mathcal{H}}
\def\I{\mathbf{I}}
\def\Y{\mathbf{Y}}
\def\X{\mathbf{X}}
\def\G{\mathbf{G}}
\def\B{\mathbf{B}}
\def\f{\mathbf{f}}
\def\z{\mathbf{z}}
\def\v{\mathbf{v}}
\def\y{\mathbf{y}}
\def\d{\hat{d}}
\def\bx{\mathbf{x}}
\def\bI{\mathbf{I}}
\def\y{\mathbf{y}}
\def\g{\mathbf{g}}
\def\w{\mathbf{w}}
\def\b{\mathbf{b}}
\def\bxi{{\boldmath\xi}}
\def\a{\mathbf{a}}
\def\u{\mathbf{u}}
\def\q{\mathbf{q}}
\def\p{\mathbf{p}}
\def\e{\mathbf{e}}
\def\s{\mathcal{S}}
\def\cc{\mathcal{C}}
\def\co{{\rm co}\,}
\def\tg{\tilde{g}}
\def\tx{\tilde{\x}}
\def\tg{\tilde{g}}
\def\tA{\tilde{\A}}
\def\balpha{\mathbf{\alpha}}

\def\supmu{{\rm supp}\,\mu}
\def\supp{{\rm supp}\,}
\def\cd{\mathcal{C}_d}
\def\cok{\mathcal{C}_{\K}}
\def\cop{COP}
\def\vol{{\rm vol}\,}\newcommand{\PP}{\mathbb{P}}
\newcommand{\QQ}{\mathbb{Q}}
\newcommand{\EE}{\mathbb{E}}
\newcommand{\RR}{\mathbb{R}}
\newcommand{\NN}{\mathbb{N}}
\newcommand{\II}{\mathbb{I}}
\newcommand{\lgl}{<_{gl}}
\newcommand{\leqgl}{\leq_{gl}}
\newcommand{\jb}{\color{red}}
\newcommand{\ed}{\color{blue}}

\def\om{\mathbf{\Omega}}
\def\Hom{\mathrm{Hom}[\x]}
\def\gl{{\tiny{gl}} }
\def\glex{<_\gl}

\title{The empirical Christoffel function with applications in data analysis}
\author{Jean B. Lasserre\thanks{LAAS-CNRS and Institute of Mathematics, University of Toulouse, LAAS, 7 avenue du Colonel Roche 31077 Toulouse, France.}, Edouard Pauwels\thanks{IRIT, Universit\'e Toulouse 3 Paul Sabatier, 118 route de Narbonne, 31062 Toulouse, France.}.
}

\date{}
\maketitle
\begin{abstract}
	We illustrate the potential applications in machine learning of the Christoffel function, or more precisely,
	its empirical counterpart  associated with a counting measure uniformly supported on a finite set of points. Firstly, we provide a thresholding scheme which allows to approximate the support of a measure from a finite subset of its moments with strong asymptotic guaranties. Secondly, we provide a consistency result which relates the empirical Christoffel function and its population counterpart in the limit of large samples. Finally, we illustrate the relevance of our results on simulated and real world datasets for several applications in statistics and machine learning: (a) density and support estimation from finite samples, (b) outlier and novelty detection and (c) affine matching.
\end{abstract}

\section{Introduction}
The main claim of this paper is that the Christoffel function (a tool from Approximation Theory) can prove to be very useful machine learning applications. The Christoffel function is associated with a finite measure and a degree parameter $d$. It has an important history of research with strong connection to orthogonal polynomials \cite{szego1974orthogonal,dunkl2001orthogonal}, interpolation and approximation theory \cite{nevai1986geza,demarchi2014multivariate}. Its typical asymptotic behavior as $d$ increases is of particular interest because, in some specific settings, it provides very relevant information on the support and the density of the associated input measure. 
Important references include \cite{mate1980bernstein,mate1991szego,totik2000asymptotics,GPSS} in a single dimension, \cite{bos1994asymptotics,bos1998asymptotics,xu1999asymptoticsSimplex,berman2009bergman,kroo2012christoffelBall} for specific multivariate settings and \cite{kroo2012christoffel} for ratii of mutually absolutely continuous measures. The topic is still a subject of active research, but regarding properties of the Christoffel function, a lot of information is already available. 

The present work shows how properties of the Christoffel function can be used successfully in some machine learning applications. To the best of our our knowledge, this is the first attempt in such a context with the recent work of \cite{lasserre2016sorting} and \cite{malyshkin2015multiple}. More precisely, we consider the {\it empirical} Christoffel function, a specific case where the input measure is a scaled counting measure uniformly supported on a set (a cloud) of datapoints. This methodology has three distinguishing features:
(i) It is extremely simple and involves no optimization procedure, (ii)  it scales linearly with the number of observations (one pass over the data is sufficient), and (iii) it is affine invariant. These three features prove to be especially important in all the applications that we consider.
	
In \cite{lasserre2016sorting} we  have exhibited a striking property of some distinguished family of sum-of-squares (SOS) polynomials $(Q_d)_{d\in\N}$, indexed by their degree ($2d \in \N$), and easily computed from empirical moments associated with a cloud of $n$ points in $\R^p$ which we call $\X$. The associated family of sublevel sets $S_{\alpha,d} = \{\x:Q_d(\x)\leq \alpha\}$, for various values of $\alpha > 0$, approximates the global shape of original cloud of points $\X$. The degree index $d$ can be used as a tuning parameter, trading off regularity of the polynomial $Q_d$ with the fitness of the approximation of the shape of $\X$ (as long as the cloud contains sufficiently many points). Remarkably, even with relatively low degree $d$, the sets $S_{\alpha,d}$ capture acurately the shape of $\X$, and so provides a compact (algebraic) encoding of the cloud.

In fact the reciprocal function $\x \mapsto Q_d(\x)^{-1}$ is precisely the {\it Christoffel function} $\Lambda_{\mu_n,d}$ associated to the empirical counting measure supported on $\X$ and the degree index $d$. Some properties of the Christoffel function stemming from approximation theory suggest that it could be exploited in a statistical learning context by considering its empirical counterpart. The purpose of this work is to push this idea further. In particular we investigate (a) further properties of the Christoffel function which prove to be relevant in some  machine learning applications, (b) statistical properties of the empirical Christoffel function as well as (c) further applications to well known machine learning tasks.

\subsection*{Contributions} This paper significantly extends \cite{lasserre2016sorting} in several directions. Indeed our contribution is threefold:

I. We first provide a thresholding scheme which allows to approximate the compact support $S$ of a measure with strong asymptotic guarantees. This result rigorously establishes the property that, as $d$ increases, the scaled Christoffel function decreases to zero outside $S$ and remains positive in the interior of $S$.

II. In view of potential applications in machine learning we provide a rationale for using the empirical Christoffel function in place of its population counterpart in the limit of large sample size. We consider a compactly supported population measure $\mu$ as well as an empirical measure $\mu_n$ uniformly supported on a sample of $n$ vectors in $\RR^p$, drawn independently from $\mu$. For each fixed $d$ we show a highly desirable strong asymptotic property as $n$ increases. Namely, the empirical Christofell function  $\Lambda_{\mu_n,d}(\cdot)$ converges, {\it uniformly in $\x\in\R^p$}, to $\Lambda_{\mu,d}(\cdot)$, almost-surely with respect to the draw of the random sample.

III. We illustrate the benefits of the empirical Christoffel function in some important applications, mainly in machine learning. The rationale for such benefits builds on approximation properties of the Christoffel function combined with our consistency result. In particular, we first show on simulated data that the Christoffel function can be useful for {\it density estimation} and {\it support inference}. In \cite{lasserre2016sorting} we have described how the Christoffel function yields a simple procedure for intrusion detection in networks, and here we extend these results by performing a numerical comparison with well established methods for novelty detection on a real world dataset. Finally we show that the Christoffel function is also very useful to perform affine matching and inverse affine shuffling of a dataset.

\subsection*{Comparison with existing literature on set estimation}
Support estimation and more generaly set estimation has a long history in statistics and we intend to give a nonexhaustive overview in this section. The main question of interest is that of inferering a set (support, level sets of the density function \ldots) based on independants samples from an unknown distribution. Pioneering works include \cite{renyi1963konvexe,geffroy1964probleme} followed by \cite{chevalier1976estimation,devroye1980detection} and resulted in the introduction and first analyses for estimators based on convex hull for convex domains or union of balls for nonconvex sets. This motivated the development of minimax statistical analysis for the set estimation problem \cite{hardle1995estimation,mammen1995asymptotical,tsybakov1997nonparametric} and the introduction of more sophisticated optimal estimators, such as the excess mass estimator \cite{polonik1995measuring}. Strong relations between set estimation and density estimation lead to the development of the plugin approach for support and density level set estimation \cite{cuevas1997plugin,molchanov1998limit} with futher generalization proposed in \cite{cuevas2006plugin} and a precise minimax analysis described in \cite{rigollet2009optimal}.

These works provide a rich statitical analysis of the main estimation approaches currently available. The topic is still active with more precise questions ranging from inference of topological properties \cite{aaron2016local}, new geometric conditions \cite{cholaquidis2014poincare}, adaptivity to local properties of the underlying density \cite{patschkowski2016adaptation,singh2009adaptive}.

One of the goals of our work is the introduction of the Christoffel function as a tool to solve similar problems. This approach has several advantages
\begin{itemize}
				\item The Christoffel function allows to encode the global shape of a cloud of points in any finite dimension using a polynomial level set. This kind of encoding is relatively simple and compact. This has clear advantages, for example, the evaluation of a polynomial has a complexity which does not depend on the size of the sample used to qualibrate its coefficients and the boundary of the corresponding sublevel set as a very compact representation as an algebraic set. Furthermore, it turns out that the estimation of the empirical Christoffel function has a computational cost which is linear in the sample size. This is in contrast with distance based approaches for which membership evaluation requires to query all the sample points. As pointed out in \cite{baillo2000set}, the practical use of multidimensional set estimation techniques involves formidable computational difficulties so that simplicity arises as a major advantage in this context.
				\item The proposed approach is specific in the sense that it relies on tools which were not considered before for support estimation such as orthogonal polynomials. Topological properties of the support of the distribution or its boundary arise as major questions beyond minimax analysis \cite{aaron2016local}. In this realm, the objects which we manipulate have a simple algebraic description and could be coupled with computational real algebraic geometry tools to infer topological properties such as, for example, Betty numbers \cite{basu2005computing}. This strong algebraic structure could in principle allow to push further the statistical settings which could be handled, with, for example, notions such as singular measures and intrinsic dimension.
\end{itemize}
We see these facts as potential advantages of the Christoffel function in the context of support estimation and relevant motivation to further study the potential of this procedure in modern data analysis contexts. However, we emphasize that this work constitutes only a first step in this direction. Indeed, we are not able to provide a complete statistical efficiency analysis as precisely described in the support and set estimation literature (\textit{e.g.} \cite{cuevas2006plugin}). This would require further studies of precise properties of the Christoffel function itself which are not available given the state of knowledge for this object. We aim at providing a rationale for the proposed approach and motivation for future studies, among which a complete statistical analysis is a longer term goal.\footnote{In particular qualibration of the underlying polynomial degree as a function of the sample size is out of the scope of this paper and left for future research.}

\subsection*{Organisation of the paper}
Section \ref{sec:notations} describes the notation and definitions which will be used throughout the paper. In Section \ref{sec:christoffelFunction} we introduce the Christoffel function, 
outline some of its known properties and describe our main theoretical results. Applications are presented in Section \ref{sec:applications} where we consider both simulated and real world data as well as a comparison with well established machine learning methods. For clarity of exposition most proofs and technical details are postponed to the Appendix in Section \ref{appendix}.

\section{Notation, definitions and Preliminary results}
\label{sec:notations}
\subsection{Notation and definitions}

We fix the ambient dimension to be $p$ throughout the text. For example, we will manipulate vectors in $\RR^p$ as well as $p$-variate polynomials with real coefficients. We denote by $X$ a set of $p$ variables $X_1, \ldots, X_p$ which we will use in mathematical expressions defining polynomials. We identify monomials from the canonical basis of $p$-variate polynomials with their exponents in $\NN^p$: we associate to $\alpha = (\alpha_i)_{i =1 \ldots p} 
\in \NN^p$ the monomial $X^\alpha := X_1^{\alpha_1} X_2^{\alpha_2} \ldots X_p^{\alpha_p}$ which degree 
is $\deg(\alpha) := \sum_{i=1}^p \alpha_i=\vert\alpha\vert$. We use the expressions $\lgl$ and $\leqgl$ to denote the graded lexicographic order, a well ordering over $p$-variate monomials. This amounts to, first, use the canonical order on the degree and, second, break ties in monomials with the same degree using the lexicographic order with $X_1 =a, X_2 = b \ldots$ For example, the monomials in two variables $X_1, X_2$, of degree less or equal to $3$ listed in this order are given by: $1,\,X_1, \,X_2, \,X_1^2, \,X_1X_2, \,X_2^2,\, X_1^3,\, X_1^2X_2,\, X_1X_2^2,\, X_2^3$. 

We denote by $\NN^p_d$, the set $\left\{ \alpha \in \NN^p;\; \deg(\alpha) \leq d \right\}$ ordered by $\leqgl$. $\RR[X]$ denotes the set of $p$-variate polynomials: linear combinations of monomials with real coefficients. The degree of a polynomial is the highest of the degrees of its monomials with nonzero coefficients\footnote{For the null polynomial, we use the convention that its degree is $0$ and it is $\leqgl$ smaller than all other monomials.}. We use the same notation, $\deg(\cdot)$, to denote the degree of a polynomial or of an element of $\NN^p$. For $d \in \NN$, $\RR[X]_d$ denotes the set of $p$-variate polynomials of degree at most $d$. We set $s(d) = {p+d \choose d}$, the number of monomials of degree less or equal to $d$. 

We will denote by $\v_d(X)$ the vector of monomials of degree less or equal to $d$ sorted by $\leqgl$, i.e., $\v_d(X) := \left( X^\alpha \right)_{\alpha \in \NN^p_d} \in \RR[X]^{s(d)}_d$. With this notation, we can write a polynomial $P\in\RR[X]_d$ as $P(X) = \left\langle\p, \v_d(X)\right\rangle$ for some real vector of coefficients $\p = \left( p_{\alpha} \right)_{\alpha \in \NN_d^p} \in \RR^{s(d)}$ ordered using $\leqgl$. 
Given $\x = (x_i)_{i = 1 \ldots p} \in \RR^p$, $P(\x)$ denotes the evaluation of $P$ with the assignments $X_1 = x_1, X_2 = x_2, \ldots X_p = x_{p}$. Given a Borel probability measure $\mu$ and $\alpha \in \NN^p$, $y_{\alpha}(\mu)$ denotes the moment $\alpha$ of $\mu$, i.e., $y_{\alpha}(\mu) = \int_{\RR^p} \x^{\alpha} d\mu(\x)$. Finally for $\delta>0$ and every $\x\in\R^p$, let $\BB_{\delta}(\x):=\{\x: \Vert\x\Vert \leq \delta\}$ be the closed Euclidean ball of radius $\delta$ and centered at $\x$. We use the shorthand notation $\B$ to denote the closed Euclidean unit ball. For a given subset of Euclidean space, $A$, $\partial A$ denotes the topological boundary of $A$. 
Recall that its Lebesgue volume ${\rm vol}(\B_{\delta}(\x))$ satisfies:
\begin{align*}
	{\rm vol}(\B_{\delta}(\x)) = \frac{\pi^{\frac{p}{2}}}{\Gamma\left( \frac{p}{2} + 1 \right)}\delta^p,\qquad\forall \x\in\RR^p.
\end{align*}

Furthermore, let $\omega_p:= \frac{2\pi^{\frac{p+1}{2}}}{\Gamma\left( \frac{p+1}{2} \right)}$ denote the surface of the $p$ dimensional unit sphere in $\RR^{p+1}$. Throughout the paper, we will only consider measures of which all moments are finite.

\subsubsection*{Moment matrix}
For a finite Borel measure $\mu$ on $\R^p$ denote by ${\rm supp}(\mu)$ its support, i.e., the smallest closed
set $\om\subset\R^p$ such that $\mu(\R^p\setminus\om)=0$. The moment matrix of $\mu$, $\M_d(\mu)$, is a matrix indexed by monomials of degree at most $d$ ordered by $\leqgl$. For $\alpha,\beta \in \NN^p_d$, the corresponding entry in $\M_d(\mu)$ is defined by $\M_d(\mu)_{\alpha,\beta} := y_{\alpha +\beta}(\mu)$, the moment 
$\int\x^{\alpha+\beta}d\mu$ of $\mu$. When $p = 2$ and $d = 2$, letting $y_{\alpha} = y_{\alpha} (\mu)$ for $\alpha \in \NN_4^2$, we have
$$\M_2(\mu): \quad\begin{array}{rccccccccc}
& & &1& X_1  & X_2  & X_1^2 & X_1 X_2 &  X_2^2 \\
 & & & \\
 1  &   \quad & &1 & y_{10} & y_{01} & y_{20}& y_{11}& y_{02} \\
X_1&   \quad &&y_{10} & y_{20} & y_{11}&y_{30} & y_{21}& y_{12} \\
X_2&   \quad & &y_{01} & y_{11} & y_{02}& y_{21} &y_{12}& y_{03} \\
X_1^2& \quad & &y_{20} & y_{30} & y_{21}& y_{40} & y_{31}& y_{22} \\
X_1X_2& \quad & &y_{11} & y_{21} & y_{12}& y_{31} & y_{22}& y_{13}\\
X_2^2& \quad & &y_{02} & y_{12} &y_{03}& y_{22} & y_{13}& y_{04}\\
\end{array}.$$
The matrix $\M_d(\mu)$ is positive semidefinite for all $d \in \NN$. Indeed, for any $\p \in \RR^{s(d)}$, let $P \in \RR[X]_d$ be the polynomial with vector of coefficients $\p$; then $\p^T\M_d(\mu)\p = \int_{\RR^p} P(\x)^2 d\mu(\x) \geq 0$. We also have the identity $\M_d(\mu) = \int_{\RR^p} \v_d(\x) \v_d(\x)^T d\mu(\x)$ where the integral is understood elementwise.

\subsubsection*{Sum of squares (SOS)}

We denote by $\Sigma[X] \subset \RR[X]$ (resp. $\Sigma[X]_d \subset \RR[X]_d$), the set of polynomials (resp. polynomials of degree at most $d$) which can be written as a sum of squares of polynomials. Let $P \in \RR[X]_{2m}$ for some $m \in \NN$, then $P$ belongs to $\Sigma[X]_{2m}$ if there exists a finite $J \subset \NN$ and a family of polynomials $P_j \in \RR[X]_m$, $j \in J$, such that $P = \sum_{j \in J} P_j^2$. It is obvious that sum of squares polynomials are always nonnegative. A further interesting property is that this class of polynomials is connected with positive semidefiniteness. Indeed, $P$ belongs to $\Sigma[X]_{2m}$ if and only if
\begin{align}
	\exists \,Q \in \RR^{s(m) \times s(m)},\, Q=Q^T,\,Q\succeq 0,\, P(\x) = \v_m(\x)^T Q \v_m(\x),\, \forall \x \in \RR^p.
	\label{eq:equivSOS}
\end{align}
As a consequence, every real symmetric positive semidefinite matrix $Q \in \RR^{s(m) \times s(m)}$ defines a polynomial in $\Sigma[X]_{2m}$ by using the representation (\ref{eq:equivSOS}).

\subsubsection*{Orthonormal polynomials}
We define a classical \cite{szego1974orthogonal,dunkl2001orthogonal} family of orthonormal polynomials, $\left\{ P_\alpha \right\}_{\alpha \in \NN^p_d}$ ordered according to $\leqgl$, which satisfies for all $\alpha \in \NN^p_d$
\begin{equation}
\langle P_\alpha, P_\beta \rangle_\mu = \delta_{\alpha=\beta},\quad \langle P_\alpha, X^\beta \rangle_\mu = 0,\ {\rm if} \ \beta \lgl \alpha,
\,\langle P_\alpha, X^\alpha \rangle_\mu > 0.
\label{eq:orthoDef}
\end{equation}
Existence and uniqueness of such a family is guaranteed by the Gram-Schmidt orthonormalization process following the $\leqgl$ ordering on monomials and by the positivity of the moment matrix, see for instance \cite{dunkl2001orthogonal} Theorem 3.1.11.

Let $\D_d(\mu)$ be the lower triangular matrix of which rows are the coefficients of the polynomials $P_\alpha$ defined in (\ref{eq:orthoDef}) ordered by $\leqgl$. It can be shown that $\D_d(\mu) = \L_d(\mu)^{-T}$, where $\L_d(\mu)$ is the Cholesky factorization of $\M_d(\mu)$. Furthermore, there is a direct relation with the inverse moment matrix as $\M_d(\mu)^{-1} = \D_d(\mu)^T \D_d(\mu)$ (\cite{helton2008measure} Proof of Theorem 3.1).

\section{The Christoffel function and its empirical counterpart}
\label{sec:christoffelFunction}

\subsection{The Christoffel function}
\label{sec:christo}
Let $\mu$ be a finite Borel measure on $\R^p$ with all moments finite and such that its moment matrix $\M_d(\mu)$
is positive definite for every  $d=0,1,\ldots$. 
For every $d$, define the function $\kappa_{\mu,d}:\R^p\times\R^p\to\R$ by:
\begin{align}
	\label{eq:defKappa}
	(\x,\y)\,\mapsto\,\kappa_{\mu,d}(\x,\y)\,:=\,\sum_{\alpha\in\N^p_d}P_\alpha(\x)\,P_\alpha(\y) = \v_d(\x)^T \M_d(\mu)^{-1}\v_d(\y),
\end{align}
where the family of polynomials $\left\{P_{\alpha}  \right\}_{\alpha \in \NN_d^p}$ is defined in (\ref{eq:orthoDef}) and the last equality follows from properties of this family (see also \cite{lasserre2016sorting}). The {\it kernel} $(\x,\y)\mapsto K(\x,\y):=\,\sum_{\alpha\in\N^p}P_\alpha(\x)\,P_\alpha(\y)$
is a {\it reproducing kernel} on $L_2(\mu)$ because
\[P_\alpha(X)\,=\,\int K(X,\y)\,P_\alpha(\y)\,d\mu(\y),\qquad \forall \alpha\in\N^p,\]
that is, the $(P_\alpha)$ are eigenvectors of the associated operator on $L_2(\mu)$, and so
\[p(X)\,=\,\int K(X,\y)\,p(\y)\,d\mu(\y),\qquad \forall p\in\R[X].\]
The function $\x\mapsto \Lambda_{\mu,d}(\x):=\kappa_{\mu,d}(\x,\x)^{-1}$ is called the
{\it Christoffel function} associated with $\mu$ and $d\in\N$. The following result states a fundamental extremal property of the Christoffel function.

\begin{thm}[see e.g. \cite{dunkl2001orthogonal,nevai1986geza}]
		\label{th:christoffel}
		Let $\bxi\in\R^p$ be fixed, arbitrary. Then
		\begin{equation}
			\label{eq:christoffel}
			\Lambda_{\mu,d}(\bxi)\,=\,\min_{P\in\RR[X]_d}\:\left\{\int_{\RR^p} P(\x)^2\,d\mu(\x): P(\bxi)\,=\,1 \right\}.
		\end{equation}
		\end{thm}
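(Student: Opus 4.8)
The plan is to recognize the minimization in \eqref{eq:christoffel} as a finite-dimensional quadratic program and solve it explicitly using the moment-matrix representation of the objective. First I would parametrize: any $P\in\RR[X]_d$ is $P(\x)=\langle\p,\v_d(\x)\rangle$ for a unique $\p\in\RR^{s(d)}$, and by the identity $\M_d(\mu)=\int\v_d(\x)\v_d(\x)^T d\mu(\x)$ we have $\int P(\x)^2 d\mu(\x)=\p^T\M_d(\mu)\p$. The constraint $P(\bxi)=1$ becomes the linear constraint $\langle\p,\v_d(\bxi)\rangle=1$. So \eqref{eq:christoffel} is equivalent to
\begin{equation*}
	\min_{\p\in\RR^{s(d)}}\left\{\p^T\M_d(\mu)\p \;:\; \langle\p,\v_d(\bxi)\rangle=1\right\}.
\end{equation*}

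Next I would solve this equality-constrained convex quadratic program. Since $\M_d(\mu)$ is positive definite by hypothesis, the problem is strictly convex with a compact sublevel structure, so a unique minimizer exists; a Lagrange multiplier (KKT) argument gives $\M_d(\mu)\p=\lambda\,\v_d(\bxi)$, hence $\p=\lambda\,\M_d(\mu)^{-1}\v_d(\bxi)$, and enforcing $\langle\p,\v_d(\bxi)\rangle=1$ fixes $\lambda=\left(\v_d(\bxi)^T\M_d(\mu)^{-1}\v_d(\bxi)\right)^{-1}$. Substituting back, the optimal value is $\p^T\M_d(\mu)\p=\lambda^2\,\v_d(\bxi)^T\M_d(\mu)^{-1}\v_d(\bxi)=\lambda=\left(\v_d(\bxi)^T\M_d(\mu)^{-1}\v_d(\bxi)\right)^{-1}$, which by \eqref{eq:defKappa} is exactly $\kappa_{\mu,d}(\bxi,\bxi)^{-1}=\Lambda_{\mu,d}(\bxi)$. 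Alternatively, and perhaps more cleanly, I would give the direct Cauchy–Schwarz argument: for any feasible $\p$, writing $A=\M_d(\mu)^{1/2}$, we have $1=\langle\p,\v_d(\bxi)\rangle^2=\langle A\p, A^{-1}\v_d(\bxi)\rangle^2\le \|A\p\|^2\,\|A^{-1}\v_d(\bxi)\|^2=\left(\p^T\M_d(\mu)\p\right)\left(\v_d(\bxi)^T\M_d(\mu)^{-1}\v_d(\bxi)\right)$, with equality attainable by aligning the two vectors; this simultaneously proves the lower bound and exhibits the optimizer.

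The only genuine subtlety — the part I would be most careful about — is checking that the quadratic program above is really equivalent to \eqref{eq:christoffel}, i.e.\ that the map $P\mapsto\p$ is a bijection between $\RR[X]_d$ and $\RR^{s(d)}$ (so that minimizing over polynomials is the same as minimizing over coefficient vectors) and that the feasible set is nonempty (which holds precisely because $\v_d(\bxi)\neq 0$, as its first entry is $1$). Everything else is a routine computation with positive definite matrices; the positive definiteness of $\M_d(\mu)$, assumed throughout this subsection, is what makes $\M_d(\mu)^{-1}$ well defined and the minimizer unique. I would also remark that this is the standard reproducing-kernel extremal characterization: $\kappa_{\mu,d}$ is the reproducing kernel of $\RR[X]_d\subset L_2(\mu)$, and $\Lambda_{\mu,d}(\bxi)^{-1}=\kappa_{\mu,d}(\bxi,\bxi)=\sup\{P(\bxi)^2:\int P^2 d\mu\le 1\}$ is the squared norm of the evaluation functional at $\bxi$, which is the reciprocal form of \eqref{eq:christoffel}.
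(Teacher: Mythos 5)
Your proposal is correct and follows essentially the same route as the paper: the paper cites Theorem \ref{th:christoffel} to the literature but proves the accompanying Theorem \ref{th-christoffel} by exactly your argument, rewriting the problem as $\min\{\p^T\M_d(\mu)\p : \p^T\v_d(\bxi)=1\}$ and solving it via KKT/Lagrange multipliers with $\M_d(\mu)\succ 0$, which yields the optimal value $\kappa_{\mu,d}(\bxi,\bxi)^{-1}=\Lambda_{\mu,d}(\bxi)$. Your Cauchy--Schwarz variant is a clean alternative but is not needed beyond that.
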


		The Christoffel function plays an important role in orthogonal polynomials and the theory of interpolation and approximation, see \textit{e.g.} \cite{szego1974orthogonal,dunkl2001orthogonal}. One is particularly interested in the asymptotics of the normalized Christoffel function $\x\mapsto s(d)\Lambda_{\mu,d}(\x)$ as $d\to\infty$. The subject has a very long history in the univariate case, see \cite{nevai1986geza} for a detailed historical account prior to the 80's. The first quantitative asymptotic result was given in \cite{mate1980bernstein} and was latter improved by \cite{mate1991szego} and \cite{totik2000asymptotics}. In the multivariate setting, precise results are known in some particular cases such as balls, spheres and simplices \cite{bos1994asymptotics,bos1998asymptotics,xu1996asymptotics,xu1999asymptoticsSimplex,kroo2012christoffelBall} but much remains to be done for the general multivariate case. A typical example of asymptotic result is given under quite general (and technical) conditions in \cite{kroo2012christoffelBall,kroo2012christoffel}. This work
shows that, as $d\to\infty$, the limit  of the ratio, $\frac{\Lambda_{\nu,d}}{\Lambda_{\mu,d}}$, of two Christoffel functions associated to two mutually absolutely continuous measures $\mu$ and $\nu$, converges to the density $\frac{d\nu}{d\mu}(\x)$ on the interior of their common support. 

\begin{rem}
\label{numeval}
Notice that Theorem \ref{th:christoffel} also provides a method to compute the numerical value $\Lambda_{\mu,d}(\x)$ for $\x\in\RR^n$, fixed, arbitrary. Then indeed  
(\ref{eq:christoffel}) is a convex quadratic programming problem which can be solved efficiently, even in high dimension using first order methods such as projected gradient descent and its stochastic variants. This is particularly interesting when the  
nonsingular moment matrix $\M_d(\mu)$ is large.
\end{rem}

We next provide additional insights on Theorem \ref{th:christoffel} and solutions of (\ref{eq:christoffel}).

\begin{thm}
\label{th-christoffel}
For any $\bxi \in \RR^p$, the optimization problem in (\ref{eq:christoffel}) is convex with a unique optimal solution $P^*_d\in\RR[X]_d$ defined 
by
\begin{equation}
\label{pstar}
P^*_d(X)\,=\,\frac{\kappa_{\mu,d}(X,\bxi)}{\kappa_{\mu,d}(\bxi,\bxi)}\,=\,
\Lambda_{\mu,d}(\xi)\,\kappa_{\mu,d}(X,\bxi).
\end{equation}
In addition, 
\begin{eqnarray}
\label{th-christoffel-1}
\Lambda_{\mu,d}(\bxi)&=&\int P^*_d(\x)^2\,d\mu(\x)\,=\,\int P^*_d(\x)\,d\mu(\x)\\
\label{th-christoffel-2}
 \Lambda_{\mu,d}(\bxi) \bxi^\alpha &=&\int \x^\alpha\,P^*_d(\x)\,d\mu(\x)\qquad\alpha\in\N^p_d.
\end{eqnarray}
\end{thm}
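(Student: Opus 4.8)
The plan is to exploit the fact that the minimization in \eqref{eq:christoffel} is a linearly-constrained convex quadratic program, so the machinery of Lagrange multipliers (or, equivalently, orthogonality of the residual) applies directly. First I would observe that the objective $P\mapsto\int P(\x)^2\,d\mu(\x)=\p^T\M_d(\mu)\p$ is a strictly convex quadratic form in the coefficient vector $\p$ (strict by positive definiteness of $\M_d(\mu)$), and the constraint $P(\bxi)=\langle\p,\v_d(\bxi)\rangle=1$ is a single affine equation; hence a unique minimizer $P^*_d$ exists. Writing the Lagrangian $\p^T\M_d(\mu)\p-2\lambda(\langle\p,\v_d(\bxi)\rangle-1)$ and setting its gradient to zero gives $\M_d(\mu)\p^*=\lambda\,\v_d(\bxi)$, i.e. $\p^*=\lambda\,\M_d(\mu)^{-1}\v_d(\bxi)$. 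Imposing the constraint $\langle\p^*,\v_d(\bxi)\rangle=1$ forces $\lambda=\big(\v_d(\bxi)^T\M_d(\mu)^{-1}\v_d(\bxi)\big)^{-1}=\kappa_{\mu,d}(\bxi,\bxi)^{-1}=\Lambda_{\mu,d}(\bxi)$, which upon substitution yields exactly \eqref{pstar}, since $\langle\p^*,\v_d(X)\rangle=\lambda\,\v_d(\bxi)^T\M_d(\mu)^{-1}\v_d(X)=\Lambda_{\mu,d}(\bxi)\,\kappa_{\mu,d}(X,\bxi)$.

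Next I would derive \eqref{th-christoffel-1}. The first equality $\Lambda_{\mu,d}(\bxi)=\int P^*_d(\x)^2\,d\mu(\x)$ is just the statement of Theorem \ref{th:christoffel} evaluated at the optimum (the optimal value equals $\Lambda_{\mu,d}(\bxi)$), which can also be checked algebraically: $\int (P^*_d)^2\,d\mu=(\p^*)^T\M_d(\mu)\p^*=\lambda^2\,\v_d(\bxi)^T\M_d(\mu)^{-1}\v_d(\bxi)=\lambda^2\kappa_{\mu,d}(\bxi,\bxi)=\lambda=\Lambda_{\mu,d}(\bxi)$. For the second equality $\int P^*_d\,d\mu=\Lambda_{\mu,d}(\bxi)$, I would use the reproducing property: $\int \kappa_{\mu,d}(X,\bxi)\,d\mu(X)=\int\sum_{\alpha}P_\alpha(X)P_\alpha(\bxi)\,d\mu(X)=\sum_\alpha P_\alpha(\bxi)\int P_\alpha(X)\,d\mu(X)$. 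Since $P_0$ is a nonzero constant and $\int P_\alpha\,d\mu=\langle P_\alpha,P_0\rangle_\mu/P_0=\delta_{\alpha=0}\cdot(1/P_0)$ up to normalization — more cleanly, $\int P_\alpha\,d\mu=\langle P_\alpha,1\rangle_\mu$, and since $1\in\RR[X]_d$ is spanned by the $P_\beta$'s, only $\alpha=0$ survives — one gets $\int\kappa_{\mu,d}(X,\bxi)\,d\mu(X)=P_0(\bxi)P_0=\kappa_{\mu,0}(\bxi,\bxi)$... rather, the cleanest route is: $\int P^*_d\,d\mu=\langle P^*_d,1\rangle_\mu$ and by the reproducing property of $\kappa_{\mu,d}$ on $\RR[X]_d$ (which holds since $\kappa_{\mu,d}(\cdot,\bxi)$ reproduces all of $\RR[X]_d$), $\langle P^*_d,1\rangle_\mu=\Lambda_{\mu,d}(\bxi)\langle\kappa_{\mu,d}(\cdot,\bxi),1\rangle_\mu$. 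Finally the reproducing identity $\int\kappa_{\mu,d}(\x,\bxi)q(\x)\,d\mu(\x)=q(\bxi)$ for all $q\in\RR[X]_d$, applied with $q=P^*_d$, gives $\int\kappa_{\mu,d}(\x,\bxi)P^*_d(\x)\,d\mu(\x)=P^*_d(\bxi)=1$, hence $\int P^*_d(\x)^2\,d\mu(\x)=\Lambda_{\mu,d}(\bxi)\int\kappa_{\mu,d}(\x,\bxi)P^*_d(\x)\,d\mu(\x)=\Lambda_{\mu,d}(\bxi)$, recovering the first equality, and applying the reproducing identity with $q=1$ gives $\int\kappa_{\mu,d}(\x,\bxi)\,d\mu(\x)=1$, so $\int P^*_d\,d\mu=\Lambda_{\mu,d}(\bxi)\cdot 1=\Lambda_{\mu,d}(\bxi)$.

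For \eqref{th-christoffel-2} I would apply the same reproducing identity with $q(X)=X^\alpha$ for $\alpha\in\N^p_d$ (so that $q\in\RR[X]_d$): $\int\x^\alpha\,\kappa_{\mu,d}(\x,\bxi)\,d\mu(\x)=\bxi^\alpha$, hence $\int\x^\alpha\,P^*_d(\x)\,d\mu(\x)=\Lambda_{\mu,d}(\bxi)\,\bxi^\alpha$. The main point requiring care — the only genuine obstacle — is justifying that $\kappa_{\mu,d}(\cdot,\bxi)$ is the reproducing kernel for the finite-dimensional space $\RR[X]_d$ with the $L_2(\mu)$ inner product, i.e. $\int\kappa_{\mu,d}(\x,\bxi)q(\x)\,d\mu(\x)=q(\bxi)$ for all $q\in\RR[X]_d$; this follows by expanding $q=\sum_\beta c_\beta P_\beta$ in the orthonormal basis and using $\langle P_\alpha,P_\beta\rangle_\mu=\delta_{\alpha=\beta}$ together with $\kappa_{\mu,d}(\x,\bxi)=\sum_\alpha P_\alpha(\x)P_\alpha(\bxi)$, so that $\int\kappa_{\mu,d}(\cdot,\bxi)q\,d\mu=\sum_{\alpha,\beta}c_\beta P_\alpha(\bxi)\langle P_\alpha,P_\beta\rangle_\mu=\sum_\alpha c_\alpha P_\alpha(\bxi)=q(\bxi)$. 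Everything else is routine linear algebra, and the uniqueness of $P^*_d$ is immediate from strict convexity.
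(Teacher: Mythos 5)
Your proposal is correct and follows essentially the same route as the paper: identify (\ref{eq:christoffel}) as a strictly convex quadratic program with one affine constraint, obtain $P^*_d$ from the KKT/Lagrangian stationarity condition $\M_d(\mu)\p^*=\lambda\,\v_d(\bxi)$, and then read off (\ref{th-christoffel-1})--(\ref{th-christoffel-2}) from the explicit form of $P^*_d$. Your use of the reproducing property $\int \kappa_{\mu,d}(\x,\bxi)q(\x)\,d\mu(\x)=q(\bxi)$ for $q\in\RR[X]_d$ is just the coordinate-free version of the paper's step $\e_\alpha^T\M_d(\mu)P^*_d=\kappa_{\mu,d}(\bxi,\bxi)^{-1}\bxi^\alpha$, so the arguments coincide in substance.
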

The proof is postponed to Section \ref{appendix}. Interestingly, each of the orthonormal polynomials $(P_\alpha)_{\alpha\in\N^p}$ also satisfies an important and well-known extremality property. 
\begin{thm}[see e.g. \cite{dunkl2001orthogonal}]
\label{extremal2}
Let $\alpha\in\N^p$ be fixed, arbitrary and let $d=\vert\alpha\vert$. Then up to a multiplicative positive constant, $P_\alpha$ is the unique optimal solution of 
\begin{equation}
\label{th-extremal2-1}
\displaystyle\min_{P\in\R[\x]_d}\,\left\{\,\int P^2(\x)\,d\mu(\x):\: P(\x)=\x^\alpha+\sum_{\beta\lgl\, \alpha}\theta_\beta\,\x^\beta\,\quad\mbox{for some $\{\theta_\beta\}_{\beta \lgl \alpha}$}  \right\}.
\end{equation}
\end{thm}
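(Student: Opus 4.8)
The plan is to reduce the problem to a computation in the finite-dimensional subspace $V_\alpha := \mathrm{span}\{X^\beta : \beta \leqgl \alpha\} \subseteq \RR[X]_d$ (this set is finite-dimensional because $\leqgl$ is graded, so $\beta \leqgl \alpha$ forces $\deg\beta \le d$). First I would record that, since $\M_d(\mu) \succ 0$, the functional $P \mapsto \int P^2\, d\mu = \|P\|_\mu^2$ is a positive-definite quadratic form on $\RR[X]_d$, hence strictly convex and coercive, while the feasible set of \eqref{th-extremal2-1} is the nonempty affine subspace $X^\alpha + \mathrm{span}\{X^\beta : \beta \lgl \alpha\}$ of $V_\alpha$. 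A strictly convex coercive function restricted to a nonempty affine subspace of Euclidean space attains its infimum at a unique point, which already yields existence and uniqueness of the optimal solution $P^*_d$; it then remains only to identify it.

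To identify $P^*_d$, I would expand in the orthonormal basis. By the Gram--Schmidt construction behind \eqref{eq:orthoDef} --- equivalently, because $\D_d(\mu) = \L_d(\mu)^{-T}$ is lower triangular with positive diagonal --- each $P_\gamma$ lies in $\mathrm{span}\{X^\beta : \beta \leqgl \gamma\}$, so $\{P_\gamma : \gamma \leqgl \alpha\}$ is an orthonormal basis of $V_\alpha$. Writing a feasible $P = \sum_{\gamma \leqgl \alpha} c_\gamma\, P_\gamma$ and noting that $P_\gamma$ for $\gamma \lgl \alpha$ has no $X^\alpha$ term while $P_\alpha = d_{\alpha\alpha} X^\alpha + (\text{terms }\lgl\alpha)$ with $d_{\alpha\alpha} := (\D_d(\mu))_{\alpha\alpha} > 0$, the coefficient of $X^\alpha$ in $P$ equals $c_\alpha d_{\alpha\alpha}$, so feasibility forces $c_\alpha = 1/d_{\alpha\alpha}$. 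Then $\|P\|_\mu^2 = \sum_{\gamma \leqgl \alpha} c_\gamma^2 \ge c_\alpha^2$, with equality if and only if $c_\gamma = 0$ for every $\gamma \lgl \alpha$; hence $P^*_d = \tfrac{1}{d_{\alpha\alpha}}\, P_\alpha$, a positive multiple of $P_\alpha$, as claimed.

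I expect no serious obstacle: the only nontrivial ingredient is the triangularity fact used above, namely that the change of basis from $\{X^\beta\}_{\beta\leqgl\alpha}$ to $\{P_\gamma\}_{\gamma\leqgl\alpha}$ is invertible and lower triangular with positive diagonal, which is precisely what the Gram--Schmidt construction and the factorization $\D_d(\mu) = \L_d(\mu)^{-T}$ guarantee. If one prefers to avoid bases, the same conclusion follows from first-order optimality: for any $Q \in \mathrm{span}\{X^\beta : \beta \lgl \alpha\}$ the perturbation $P^*_d + tQ$ is feasible, and differentiating $\|P^*_d + tQ\|_\mu^2$ at $t = 0$ gives $\langle P^*_d, Q\rangle_\mu = 0$; since $P^*_d \in V_\alpha$, this orthogonality to the hyperplane $\mathrm{span}\{X^\beta : \beta\lgl\alpha\}$ confines $P^*_d$ to the line spanned by $P_\alpha$ (by \eqref{eq:orthoDef}), and the normalization $P^*_d = X^\alpha + \cdots$ fixes the multiple to be $1/d_{\alpha\alpha} > 0$.
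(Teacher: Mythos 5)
Your proof is correct: the paper itself gives no argument for Theorem \ref{extremal2} (it is cited as classical, see \cite{szego1974orthogonal}), and your expansion in the orthonormal basis $\{P_\gamma\}_{\gamma\leqgl\alpha}$ of $\mathrm{span}\{X^\beta:\beta\leqgl\alpha\}$, using the lower-triangularity with positive diagonal of $\D_d(\mu)=\L_d(\mu)^{-T}$ to pin down the coefficient $c_\alpha=1/d_{\alpha\alpha}>0$, is exactly the standard argument behind the cited result. Both your basis-expansion route and the first-order-optimality variant are sound, and the uniqueness claim via strict convexity on the affine feasible set is handled correctly.
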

Finally, we highlight the following important property which will be useful in the sequel.
\begin{thm}[See e.g. \cite{lasserre2016sorting}]
	\label{th:invariance}
	$\Lambda_{\mu,d}$ is invariant by change of polynomial basis $\v_d$, change of the origin of $\RR^p$ or change of basis in $\RR^p$.
\end{thm}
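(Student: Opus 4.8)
The plan is to derive all three invariance statements from the variational characterization of $\Lambda_{\mu,d}$ in Theorem \ref{th:christoffel}, namely $\Lambda_{\mu,d}(\bxi) = \min\{\int P^2\,d\mu : P \in \RR[X]_d,\ P(\bxi) = 1\}$, combined with the elementary observation that an invertible affine self-map of $\RR^p$ induces a linear bijection of $\RR[X]_d$ onto itself.

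\emph{Invariance under change of polynomial basis.} I would dispatch this first, as it is essentially a tautology: the right-hand side of \eqref{eq:christoffel} is a minimization over the entire vector space $\RR[X]_d$ and involves no choice of basis whatsoever (in particular, it does not see the specific monomial vector $\v_d$). Alternatively, replacing $\v_d$ by $\w_d := C\v_d$ for an invertible $C \in \RR^{s(d)\times s(d)}$ changes the moment matrix to $\int \w_d\w_d^T\,d\mu = C\,\M_d(\mu)\,C^T$, and $\w_d(\x)^T\big(C\,\M_d(\mu)\,C^T\big)^{-1}\w_d(\y)$ collapses to $\v_d(\x)^T\M_d(\mu)^{-1}\v_d(\y) = \kappa_{\mu,d}(\x,\y)$ after cancelling $C$ against $C^{-1}$; hence $\Lambda_{\mu,d}$ is unchanged.

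\emph{Invariance under affine changes of variables.} Let $T(\x) = A\x + \b$ with $A \in \RR^{p\times p}$ invertible and $\b \in \RR^p$ (translations are $A = I$, changes of basis in $\RR^p$ are $\b = 0$), and let $\nu$ be the image measure of $\mu$ under $T$, so that $\nu(B) = \mu(T^{-1}B)$ and $\int f\,d\nu = \int (f\circ T)\,d\mu$ for measurable $f$. Since $P \mapsto P\circ T$ is a degree-preserving automorphism of $\RR[X]_d$, the moment matrix $\M_d(\nu)$ inherits positive definiteness from $\M_d(\mu)$, so $\Lambda_{\nu,d}$ is well defined. Fix $\bxi \in \RR^p$. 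Using Theorem \ref{th:christoffel} and then the change-of-variables identity,
\[
\Lambda_{\nu,d}(T\bxi) \;=\; \min_{P\in\RR[X]_d}\Big\{\int P(\by)^2\,d\nu(\by)\;:\;P(T\bxi)=1\Big\} \;=\; \min_{P\in\RR[X]_d}\Big\{\int P(T\x)^2\,d\mu(\x)\;:\;P(T\bxi)=1\Big\}.
\]
Setting $Q := P\circ T$ induces a bijection between the two feasible sets — $Q \in \RR[X]_d$, and the constraint $P(T\bxi)=1$ is precisely $Q(\bxi)=1$ — so the last minimum equals $\min\{\int Q(\x)^2\,d\mu(\x) : Q\in\RR[X]_d,\ Q(\bxi)=1\} = \Lambda_{\mu,d}(\bxi)$. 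Thus $\Lambda_{\nu,d}(T\bxi) = \Lambda_{\mu,d}(\bxi)$ for all $\bxi$, which is the asserted invariance under change of origin and under change of basis of $\RR^p$.

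\emph{Expected main obstacle.} I do not anticipate a genuine obstacle; the argument is short. The only points that warrant a line of justification are that $P \mapsto P\circ T$ is indeed a linear bijection of $\RR[X]_d$ onto itself when $T$ is invertible affine (it maps $\RR[X]_d$ into itself, and so does its inverse $Q \mapsto Q\circ T^{-1}$), and that this forces $\M_d(\nu) \succ 0$, so that the Christoffel function of the transformed measure is well defined. Both are routine.
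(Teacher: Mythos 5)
Your proof is correct. Note that the paper itself does not prove Theorem \ref{th:invariance}: it is stated with a pointer to \cite{lasserre2016sorting}, and no argument appears in the appendix. The proof in that reference (and the standard one in the literature) works at the level of the kernel: under an invertible affine map $T(\x)=A\x+\b$ the monomial vector transforms linearly, $\v_d(T\x)=M_T\,\v_d(\x)$ with $M_T$ invertible, so the moment matrix of the image measure is $M_T\,\M_d(\mu)\,M_T^T$ and the quadratic form $\v_d^T\M_d^{-1}\v_d$ evaluated at corresponding points is unchanged; your second remark on basis change is exactly this computation. Your primary route instead goes through the variational characterization of Theorem \ref{th:christoffel}: the bijection $P\mapsto P\circ T$ of $\RR[X]_d$ identifies the feasible sets and objectives of the two minimization problems, giving $\Lambda_{T_{\#}\mu,d}(T\bxi)=\Lambda_{\mu,d}(\bxi)$ directly, and it makes the basis-invariance a tautology since the minimization never refers to $\v_d$. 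Both arguments are sound and essentially equivalent in content; the variational one is cleaner and coordinate-free (no bookkeeping of how $M_T$ acts on monomials), while the matrix argument has the minor advantage of exhibiting explicitly how $\M_d$ and the orthonormal polynomials transform, which is occasionally useful in computations. Your attention to well-posedness (that $P\mapsto P\circ T$ preserves $\RR[X]_d$ and forces $\M_d(T_{\#}\mu)\succ 0$) is exactly the right thing to check, and there is no gap. One cosmetic caveat: the statement proved is the equivariance $\Lambda_{T_{\#}\mu,d}(T\bxi)=\Lambda_{\mu,d}(\bxi)$, which is indeed how the theorem is used in Lemmas \ref{lem:lowerBoundIn} and \ref{lem:upperBoundOut}, so your reading of the (somewhat loosely worded) statement matches the paper's intent.
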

\begin{rem}
				All these statements can be deduced from identity \eqref{eq:defKappa}. Indeed, we have, for any $\bx,\by \in \RR^p$, 
				\begin{align*}
								\v_d(\x)^T \M_d(\mu)^{-1}\v_d(\y) &= (A\v_d(\x))^T \left( A \M_d(\mu)A^T\right)^{-1}(A\v_d(\y)) \\
								&= (A\v_d(\x))^T \left( \int_{\RR^p} (A\v_d(\z))(A\v_d(\z))^T d \mu(\z) \right)^{-1}(A\v_d(\y))
				\end{align*}
				for any invertible matrix $A$ of suitable size. All the proposed transformations induce a change of basis of polynomials up to degree $d$ which can be represented by such an $A$.
				\label{rem:changeBasis}
\end{rem}

\subsection{When $\mu$ is the Lebesgue measure}~
In this section we consider the important case of the Lebesgue measure on a compact set $S\subset\R^p$ such that ${\rm cl}({\rm int}(S)) = S$. It is known that in this case the Christoffel function encodes information on the set $S$; see for example the discussion in Section \ref{sec:christo}. In particular, the scaled Christoffel function remains positive on the interior of $S$. We push this idea further and present a new result asserting that it is possible to recover the set $S$ with strong asymptotic guaranties by carefully thresholding the corresponding scaled Christoffel function.
  
For any measurable set $A$, denote by $\mu_{A}$ the uniform probability measure on $A$, that is $\mu_A = \lambda_A / \lambda(A)$ where $\lambda$ is the Lebesgue measure and $\lambda_A$ the measure consisting of the restriction of Lebesgue measure to $A$ which is defined by $\lambda_A(A') = \lambda_{A \cap A'}$ for any measurable set $A'$.

\subsubsection*{Threshold and asymptotics}
The main idea is to use quantitative lower bounds on the scaled Christoffel function, $s(d)\Lambda_{\mu_S,d}$, on the interior of $S$ (Lemma \ref{lem:lowerBoundIn}) and upper bounds outside $S$ (Lemma \ref{lem:upperBoundOut}). Recall that $\mu_S$ denotes the uniform measure on $S$. In combining these bounds one proves the existence of a sequence of thresholds of the scaled Christoffel function which estimate $S$ in a strongly consistent manner. Let us introduce the following notation and assumption.
\begin{assumption}\hfill
	\label{ass:dk}
	\begin{itemize}
		\item[\it (a)] $S \subset \RR^p$ is a compact set such that ${\rm cl}({\rm int}(S)) = S$.
		\item[\it (b)] The sequence $\left( \delta_k \right)_{k \in \NN}$ is a decreasing sequence of positive numbers converging to $0$. For every $k \in \NN$,  let $d_k$ be the smallest integer such that:
			\begin{align}
				\label{eq:d_k}
				2^{3 - \frac{\delta_k d_k}{\delta_k + {\rm diam}(S)}} d_k^p \left( \frac{e}{p} \right)^p \exp\left( \frac{p^2}{d_k} \right) \leq \alpha_k 
\end{align}
	where ${\rm diam}(S)$ denotes the diameter of the set $S$, and
\[\alpha_k :=\frac{\delta_k^p\omega_p}{\lambda(S)}\frac{(d_k+1)(d_k+2)(d_k+3)}{(d_k+p+1)(d_k+p+2)(2d_k + p + 6)}\]
	\end{itemize}
\end{assumption}
\begin{rem}[On Assumption \ref{ass:dk}]\hfill
	\begin{itemize}
		\item $d_k$ is well defined. Indeed, since $\delta_k$ is positive, the left hand side of (\ref{eq:d_k}) goes to $0$ as $k\to \infty$ while the right hand side remains bounded for increasing values of $d_k$. 
		\item From the definition of $d_k$ and the fact that $\delta_k$ is decreasing, the sequence $\{d_k\}_{k\in\N}$ is non decreasing. Indeed, in (\ref{eq:d_k}) the right hand side is an increasing function of $\delta_k$ while the left hand side is decreasing so that if (\ref{eq:d_k}) is satisfied for a certain value of $d_k$ and $\delta_k$, it is also satisfied with the same $d_k$ and any value of $\delta \geq \delta_k$.
		\item Given $\left\{ \delta_k \right\}_{k \in \NN}$, computing $d_k$ can be done recursively and only requires the knowledge of ${\rm diam}(S)$ and $\lambda(S)$.
		\item A similar condition can be enforced if only upper bounds on ${\rm diam}(S)$ and on $\lambda(S)$ are available. In this case, replace these quantities by their upper bounds in (\ref{eq:d_k}) to obtain a similar result.
	\end{itemize}
\end{rem}
We are now ready to state the first main result of this section whose proof is postponed to Section \ref{appendix} for sake of clarity of exposition. Recall the definition of the Hausdorff distance $d_H(X,Y)$ between two subsets 
$X,Y$ of $\RR^p$:
\begin{align*}
	d_H(X,Y) = \max\left\{\sup_{\x\in X}\inf_{\y \in Y} {\rm dist}(\x, \y),  \sup_{\y\in Y}\inf_{\x \in X} {\rm dist}(\x, \y)\right\}.
\end{align*}
\begin{thm}
	\label{th:determinist}
	Let $S \subset \RR^p$, $\{\delta_k\}_{k \in \NN}$, $\{\alpha_k\}_{k \in \NN}$ and $\{d_k\}_{k \in \NN}$ satisfy Assumption \ref{ass:dk}. For every $k \in \NN$ let $S_k\subset\R^p$ be the set defined by,
	\begin{align*}
		S_k := \left\{ \x \in \RR^p:\:s(d_k)\,\Lambda_{\mu_S,d_k}(\x) \geq\alpha_k\right\}.
	\end{align*}
	Then, as $k \to \infty$,
	\begin{align*}
		d_H(S_k, S) &\to 0\\
		d_H(\partial S_k, \partial S) &\to 0.			
	\end{align*}
\end{thm}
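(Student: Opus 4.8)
The plan is to squeeze $S_k$ between an inner and an outer parallel body of $S$ and then let $\delta_k\to0$. Write $E_\delta:=\{\x\in\R^p:\BB_\delta(\x)\subseteq S\}$ for the closed $\delta$-erosion of $S$ and $D_\delta:=\{\x\in\R^p:{\rm dist}(\x,S)\le\delta\}$ for the closed $\delta$-dilation. The first step is to establish
\[
E_{\delta_k}\ \subseteq\ S_k\ \subseteq\ D_{\delta_k}\qquad\text{for every }k .
\]
The left inclusion is exactly Lemma~\ref{lem:lowerBoundIn}: if $\BB_{\delta_k}(\x)\subseteq S$, then $s(d_k)\Lambda_{\mu_S,d_k}(\x)$ is at least the lower bound produced by that lemma at radius $\delta_k$ and degree $d_k$, and $\alpha_k$ is, by its very definition in Assumption~\ref{ass:dk}, equal to that bound; hence $\x\in S_k$. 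For the right inclusion I would take any $\x$ with ${\rm dist}(\x,S)=t>\delta_k$ and apply Lemma~\ref{lem:upperBoundOut}: since its upper bound is decreasing in $t$, $s(d_k)\Lambda_{\mu_S,d_k}(\x)$ is strictly smaller than that bound evaluated at $t=\delta_k$, which by \eqref{eq:d_k} is $\le\alpha_k$; hence $\x\notin S_k$. I would also note that $\Lambda_{\mu_S,d_k}$ is continuous and positive (being $1/\kappa_{\mu_S,d_k}(\cdot,\cdot)$ on the diagonal, with $\kappa_{\mu_S,d_k}(\x,\x)=\sum_{\alpha\in\N^p_{d_k}}P_\alpha(\x)^2\ge P_{\mathbf{0}}^2=1$ a polynomial), so $S_k$ is closed; together with $S_k\subseteq D_{\delta_k}$ this makes $S_k$ compact, and for $k$ large $S_k\supseteq E_{\delta_k}\neq\emptyset$, so $\partial S_k\neq\emptyset$.

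With the sandwich, $d_H(S_k,S)\to0$ follows quickly. One direction is immediate: $S_k\subseteq D_{\delta_k}$ gives $\sup_{\x\in S_k}{\rm dist}(\x,S)\le\delta_k$. For the other direction, from $E_{\delta_k}\subseteq S_k$ it suffices to show $\sup_{\x\in S}{\rm dist}(\x,E_{\delta_k})\to0$, which is the classical fact that a compact set equal to the closure of its interior is approximated in Hausdorff distance by its erosions; I would argue by compactness and contradiction. If it failed, there would be $\varepsilon>0$ and $\x_k\in S$ with ${\rm dist}(\x_k,E_{\delta_k})\ge\varepsilon$; passing to a convergent subsequence $\x_k\to\x_\infty\in S={\rm cl}({\rm int}(S))$, pick $\a\in{\rm int}(S)$ with $\|\a-\x_\infty\|<\varepsilon/4$ and $\rho>0$ with $\BB_\rho(\a)\subseteq S$; then $\a\in E_{\delta_k}$ once $\delta_k\le\rho$, and ${\rm dist}(\x_k,E_{\delta_k})\le\|\x_k-\a\|<\varepsilon/2$ for $k$ large, a contradiction.

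For $d_H(\partial S_k,\partial S)\to0$ I would prove the two one-sided bounds separately, each by compactness and contradiction, using the full two-sided sandwich. \emph{(i)}~$\sup_{\y\in\partial S_k}{\rm dist}(\y,\partial S)\to0$: a bad sequence $\y_k\in\partial S_k$ with ${\rm dist}(\y_k,\partial S)\ge\varepsilon$ is bounded (since $\partial S_k\subseteq S_k\subseteq D_{\delta_k}$) and has a subsequence converging to $\y_\infty$ with ${\rm dist}(\y_\infty,S)=0$ (so $\y_\infty\in S$) and ${\rm dist}(\y_\infty,\partial S)\ge\varepsilon$ (so $\y_\infty\notin\partial S$), hence $\y_\infty\in{\rm int}(S)$; picking $\rho>0$ with $\BB_\rho(\y_\infty)\subseteq S$, for $k$ large (so $\|\y_k-\y_\infty\|<\rho/4$ and $\delta_k<\rho/4$) every $\z$ with $\|\z-\y_k\|\le\rho/4$ has $\BB_{\delta_k}(\z)\subseteq\BB_\rho(\y_\infty)\subseteq S$, i.e.\ $\z\in E_{\delta_k}\subseteq S_k$; so an open ball around $\y_k$ lies in $S_k$ and $\y_k\in{\rm int}(S_k)$, contradicting $\y_k\in\partial S_k$. \emph{(ii)}~$\sup_{\x\in\partial S}{\rm dist}(\x,\partial S_k)\to0$: since $\partial S$ is compact, a bad sequence $\x_k\in\partial S$ converges to $\x_\infty\in\partial S$, and for $k$ large a fixed open ball $B$ about $\x_\infty$ misses $\partial S_k$; as $S_k$ is closed and $B$ connected, either $B\subseteq{\rm int}(S_k)$ or $B\cap S_k=\emptyset$. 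For $k$ large the second alternative is impossible: by ${\rm cl}({\rm int}(S))=S$ there is $\a\in{\rm int}(S)\cap B$, and then $\a\in E_{\delta_k}\subseteq S_k$ once $\delta_k$ is small enough. Hence $B\subseteq{\rm int}(S_k)\subseteq D_{\delta_k}$ for all large $k$; since $\delta_k\to0$, every point of $B$ has distance $0$ to $S$, so $B\subseteq S$ and $\x_\infty\in{\rm int}(S)$, contradicting $\x_\infty\in\partial S$. Either way we reach a contradiction, which completes the proof.

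I expect \emph{(ii)} to be the main obstacle. Hausdorff closeness of the solid sets $S_k$ and $S$ does not by itself control their boundaries, and it is precisely here that the hypothesis ${\rm cl}({\rm int}(S))=S$ (forbidding $S$ from being ``thin'' anywhere, and supplying interior points arbitrarily close to $\partial S$) together with the genuinely two-sided sandwich — not merely the outer inclusion coming from Lemma~\ref{lem:upperBoundOut} — are needed, glued together by the connectedness dichotomy on small balls. A minor recurring caution is to keep track of open versus closed balls in the definition of $E_\delta$ and when checking $\BB_{\delta_k}(\z)\subseteq S$; the difference between ``${\rm dist}(\cdot,S)>\delta_k$'' and ``${\rm dist}(\cdot,S)\ge\delta_k$'' in the outer inclusion is harmless since it does not affect Hausdorff limits.
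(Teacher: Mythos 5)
Your proposal is correct and follows essentially the same route as the paper: both hinge on the sandwich $\{\x\in S:\operatorname{dist}(\x,\partial S)\ge\delta_k\}\subseteq S_k\subseteq\{\x:\operatorname{dist}(\x,S)\le\delta_k\}$ obtained from Lemma~\ref{lem:lowerBoundIn}, Lemma~\ref{lem:upperBoundOut} and the choice of $\alpha_k,d_k$ in Assumption~\ref{ass:dk}, followed by compactness/contradiction arguments exploiting $S={\rm cl}({\rm int}(S))$. The only cosmetic difference is in the boundary step, where the paper derives the explicit bound $\sup_{\x\in\partial S_k}\operatorname{dist}(\x,\partial S)\le\delta_k$ by a case analysis while you argue by contradiction with a connectedness dichotomy; both are valid consequences of the same two inclusions.
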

\begin{rem}
				\label{rem:boundary}
				The relevance of Hausdorff distance and the notion of distance between topological boundaries is discussed in \cite{cuevas2006plugin} and \cite{singh2009adaptive}.
\end{rem}
\subsubsection*{Extension to more general probability measures}

Theorem \ref{th:determinist} can easily be extended to probability measures that are more general than uniform distributions, in which case we consider the following alternative assumption.
\begin{assumption}\hfill
	\label{ass:dkBis}
	\begin{itemize}
		\item[\it (a)] $S \subset \RR^p$ is a compact set such that ${\rm cl}({\rm int}(S)) = S$. 
		\item[\it (b)] The function $w\colon {\rm int}(S) \to [w_{-},+\infty)$ is integrable on ${\rm int}(S)$ with $w_{-} > 0$. The measure $\mu$ is such that for any measurable set $A$, $\mu(A) = \int_{A\cap S} w(\x) d\x$ and $\mu(S) = 1$. $\left\{ \delta_k \right\}_{k \in \NN}$ is a decreasing sequence of positive numbers which converges to $0$. For every $k \in \NN$, let  $d_k$ be the smallest integer such that:
			\begin{align}
				\label{eq:d_kBis}
				2^{3 - \frac{\delta_k d_k}{\delta_k + {\rm diam}(S)}} d_k^p \left( \frac{e}{p} \right)^p \exp\left( \frac{p^2}{d_k} \right) \leq \alpha_k 
						\end{align}
where
\[\alpha_k	:=w_-\delta_k^p\omega_p\frac{(d_k+1)(d_k+2)(d_k+3)}{(d_k+p+1)(d_k+p+2)(2d_k + p + 6)}.\]
	\end{itemize}
\end{assumption}
Under Assumption \ref{ass:dkBis} we obtain the following analogue of Theorem \ref{th:determinist}.

\begin{thm}
	\label{th:determinist2}
	Let $S \subset \RR^p$, $w \colon S \to [w_{-}, + \infty)$, $\{\delta_k\}_{k \in \NN}$, $\{\alpha_k\}_{k \in \NN}$ and $\{d_k\}_{k \in \NN}$ satisfy Assumption \ref{ass:dkBis}. For every $k \in \NN$, let
	\begin{align*}
		S_k := \left\{ \x \in \RR^p:\: s(d_k)\,\Lambda_{\mu_S,d_k}(\x) \geq\alpha_k\right\}.
	\end{align*}
	Then, as $k \to \infty$,
	\begin{align*}
		d_H(S_k, S) &\to 0\\
		d_H(\partial S_k, \partial S) &\to 0.			
	\end{align*}
\end{thm}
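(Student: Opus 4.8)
The plan is to reproduce the two-sided argument behind Theorem~\ref{th:determinist}: sandwich $S_{k}$ between an inner erosion and an outer dilation of $S$ whose common limit, as $k\to\infty$, is $S$ itself. The only genuinely new ingredient is a comparison between the Christoffel function of $\mu$ and that of the uniform measure $\mu_{S}$, obtained from the extremal characterisation~(\ref{eq:christoffel}). Fix $\x\in\RR^{p}$, $d\in\NN$, and let $P\in\RR[X]_{d}$ satisfy $P(\x)=1$. Since $\mu$ is carried by $S$ and $\tfrac{d\mu}{d\lambda}=w\ge w_{-}$ on $\mathrm{int}(S)$,
\[
\int_{\RR^{p}}P^{2}\,d\mu\;=\;\int_{S}P(\y)^{2}w(\y)\,d\y\;\ge\;w_{-}\int_{S}P(\y)^{2}\,d\y\;=\;w_{-}\lambda(S)\int_{\RR^{p}}P^{2}\,d\mu_{S}\,.
\]
Minimising over such $P$ and applying Theorem~\ref{th:christoffel} gives $\Lambda_{\mu,d}(\x)\ge w_{-}\lambda(S)\,\Lambda_{\mu_{S},d}(\x)$ for all $\x$ and all $d$; moreover $w_{-}\lambda(S)\le\int_{S}w\,d\lambda=\mu(S)=1$, so $\tfrac{1}{\lambda(S)}\ge w_{-}$ as well.

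Next I would feed in, unchanged, the two quantitative estimates already proved for the uniform case: the interior lower bound on $s(d)\Lambda_{\mu_{S},d}(\x)$ valid when $\BB_{\delta}(\x)\subset S$ (Lemma~\ref{lem:lowerBoundIn}), and the exterior upper bound $s(d)\Lambda_{\mu_{S},d}(\x)\le 2^{\,3-\delta d/(\delta+\mathrm{diam}(S))}d^{p}(e/p)^{p}\exp(p^{2}/d)$ valid when $\mathrm{dist}(\x,S)\ge\delta$ (Lemma~\ref{lem:upperBoundOut}); since the proof of the latter merely evaluates~(\ref{eq:christoffel}) at an explicit polynomial that is uniformly small on $S$, it bounds $s(d)\Lambda_{\mu,d}$ just as well. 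Combining these with the comparison above and with the definition of $\alpha_{k}$ in Assumption~\ref{ass:dkBis} yields, whenever $\BB_{\delta_{k}}(\x)\subset S$,
\[
s(d_{k})\Lambda_{\mu,d_{k}}(\x)\;\ge\;w_{-}\lambda(S)\cdot\frac{\delta_{k}^{p}\omega_{p}}{\lambda(S)}\cdot\frac{(d_{k}+1)(d_{k}+2)(d_{k}+3)}{(d_{k}+p+1)(d_{k}+p+2)(2d_{k}+p+6)}\;=\;\alpha_{k}\,,
\]
so $\x\in S_{k}$; whereas if $\mathrm{dist}(\x,S)\ge\delta_{k}$, inequality~(\ref{eq:d_kBis}) gives $s(d_{k})\Lambda_{\mu,d_{k}}(\x)\le\alpha_{k}$ and hence $\x\notin S_{k}$ (up to the same harmless boundary adjustment as in Theorem~\ref{th:determinist}); if instead $S_{k}$ is read literally with $\Lambda_{\mu_{S},d_{k}}$ in place of $\Lambda_{\mu,d_{k}}$, the same conclusion follows by using $\tfrac{1}{\lambda(S)}\ge w_{-}$ already at the interior-bound step. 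Setting $S^{-\delta}:=\{\x:\BB_{\delta}(\x)\subset S\}$ and $S^{+\delta}:=\{\x:\mathrm{dist}(\x,S)\le\delta\}$, this gives the sandwich $S^{-\delta_{k}}\subseteq S_{k}\subseteq S^{+\delta_{k}}$.

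From the sandwich I would conclude exactly as in the proof of Theorem~\ref{th:determinist}: $\delta_{k}\downarrow 0$ makes $d_{H}(S^{+\delta_{k}},S)\to 0$ immediate, while $d_{H}(S^{-\delta_{k}},S)\to 0$ follows from a compactness (Dini-type) argument exploiting $\mathrm{cl}(\mathrm{int}(S))=S$; hence $d_{H}(S_{k},S)\to 0$, and since $S_{k}$ is closed one checks $\partial S_{k}\subseteq S^{+\delta_{k}}\setminus\mathrm{int}(S^{-\delta_{k}})$, so the same regularity forces $d_{H}(\partial S_{k},\partial S)\to 0$. I expect no obstacle beyond the uniform case: the one delicate point is pure constant-bookkeeping — one must check that the factor $w_{-}\lambda(S)$ coming from the density comparison cancels the $1/\lambda(S)$ of Lemma~\ref{lem:lowerBoundIn} and leaves exactly the constant $w_{-}$ entering $\alpha_{k}$ in Assumption~\ref{ass:dkBis}, so that the interior lower bound still reaches the threshold; the compactness/regularity argument controlling $d_{H}(S^{-\delta_{k}},S)$ and the boundary convergence is inherited verbatim from Theorem~\ref{th:determinist}.
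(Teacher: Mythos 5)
Your proposal is correct and follows essentially the same route as the paper: Lemma \ref{lem:upperBoundOut} is reused unchanged (its proof only needs $\mu$ to be a probability measure supported on $S$), the interior bound of Lemma \ref{lem:lowerBoundIn} is adapted so that $w_-$ replaces $1/\lambda(S)$, and the Hausdorff-convergence argument of Theorem \ref{th:determinist} is then repeated with the new constants. The only cosmetic difference is that you derive the adapted interior bound from the global comparison $\Lambda_{\mu,d}\geq w_-\,\lambda(S)\,\Lambda_{\mu_S,d}$, whereas the paper compares $\mu$ with $w_-\lambda_{\B_{\delta}(\x)}$ locally on the ball $\B_{\delta}(\x)\subset S$; both yield exactly the threshold $\alpha_k$ of Assumption \ref{ass:dkBis}, so the conclusion is unaffected.
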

The proof is postponed to Section \ref{appendix}.
\subsection{Discrete approximation via the empirical Christoffel function}

In this section $\mu$ is a probability measure on $\R^p$ with compact support $S$. We focus on the statistical setting where information on $\mu$ is available only through a sample of points drawn independently from the given distribution $\mu$. In this setting, for every $n \in \NN$, let $\mu_n$ denote the empirical measure uniformly supported on an independent sample of $n$ points distributed according to $\mu$. It is worth emphasizing that  in principle $\Lambda_{\mu_n,d}$ is easy to compute and requires the inversion of a square matrix of size $s(d)$, see (\ref{eq:defKappa}). Note that the definition in (\ref{eq:defKappa}) can only be used if the empirical moment matrix, $\M_d(\mu_n)$ is invertible which is the case almost surely if $\M_d(\mu)$ is invertible and $n$ is large enough. Alternatively, the numerical evaluation of $\Lambda_{\mu_n,d}(\x)$ at $\x\in\R^n$, fixed arbitrary, reduces to solving the convex quadratic programming problem (\ref{eq:christoffel}), which can in principle be done efficiently even in high dimension, see Remark \ref{numeval}.

Our second main result is for fixed $d\in\NN$ and relates the population Christoffel function $\Lambda_{\mu,d}$ and its empirical version $\Lambda_{\mu_n,d}$, as $n$ increases. We proceed by distinguishing what happens far from $S$ and close to $S$. First, Lemma \ref{lem:prelimDiscrete} ensures that both Christoffel functions associated with $\mu$ and $\mu_n$ vanish far from $S$ so that the influence of this region can be neglected. Second, 
when closer to $S$ one remains in a compact set and the strong law of large numbers applies. 

\begin{thm}
	\label{th:discretization}
	Let $\mu$ be a probability measure on $\RR^p$ with compact support. Let $\left\{ X_i \right\}_{i \in \NN}$ be a sequence of i.i.d. $\R^p$-valued random variables with common distribution $\mu$. For $n = 1, 2, \ldots$, define the (random) empirical probability measure $\mu_n = \frac{1}{n}\sum_{i=1}^n \delta_{X_i}$. Then, for every $d \in \NN$, $d>0$, such that the moment matrix $\M_d(\mu)$ is invertible, it holds that
	\begin{align}	
		\label{eq:discretization}
		\sup_{\x \in \RR^p} \left\{|\Lambda_{\mu_n,d}(\x) - \Lambda_{\mu,d}(\x)|\right\} \quad \overset{a.s.}{\underset{n \to \infty}{\longrightarrow}} \quad0.
	\end{align}
	Equivalently
	\begin{align}	
		\label{eq:discretization22}
		\Vert\Lambda_{\mu_n,d}- \Lambda_{\mu,d}\Vert_\infty \quad \overset{a.s.}{\underset{n \to \infty}{\longrightarrow}} \quad0.
	\end{align}
	where $\Vert\cdot\Vert_\infty$ denotes the usual ``sup-norm".
\end{thm}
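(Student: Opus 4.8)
The plan is to split $\RR^p$ into a compact region near $\supp\mu=S$ and its complement, handle each separately, and glue the two estimates with a triangle inequality. First I would fix $d\in\NN^*$ with $\M_d(\mu)$ invertible. Since the empirical moment matrix entries $\M_d(\mu_n)_{\alpha\beta}=\frac1n\sum_{i=1}^n X_i^{\alpha+\beta}$ are averages of i.i.d.\ bounded random variables (bounded because $S$ is compact), the strong law of large numbers gives $\M_d(\mu_n)\to\M_d(\mu)$ entrywise almost surely; hence, on the almost sure event where this holds, for $n$ large $\M_d(\mu_n)$ is invertible and $\M_d(\mu_n)^{-1}\to\M_d(\mu)^{-1}$ (matrix inversion is continuous on the open set of invertible matrices). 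This is the only place randomness enters: from here on I work pathwise on that probability-one event.

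Next I would take care of the region far from $S$. The excerpt promises a Lemma~\ref{lem:prelimDiscrete} stating that both $\Lambda_{\mu,d}$ and $\Lambda_{\mu_n,d}$ vanish far from $S$; concretely, I would use it to fix a radius $R$ (independent of $n$, valid for all large $n$ on our event) such that $\sup_{\Vert\x\Vert\ge R}\Lambda_{\mu,d}(\x)$ and $\sup_{\Vert\x\Vert\ge R}\Lambda_{\mu_n,d}(\x)$ are both $\le\varepsilon$. The mechanism behind such a lemma is the extremal characterization in Theorem~\ref{th:christoffel}: for $\x$ of large norm one exhibits an explicit feasible polynomial $P$ with $P(\x)=1$ whose $L^2(\mu)$- (resp.\ $L^2(\mu_n)$-) norm is small, e.g.\ a scaled product of affine factors vanishing on $S$; the uniform control on the empirical side follows because the relevant integrals are again empirical averages of bounded quantities controlled via $\M_d(\mu_n)\to\M_d(\mu)$. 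Granting the lemma, on $\{\Vert\x\Vert\ge R\}$ we immediately get $|\Lambda_{\mu_n,d}(\x)-\Lambda_{\mu,d}(\x)|\le 2\varepsilon$.

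On the compact ball $\B_R:=\{\Vert\x\Vert\le R\}$ I would argue by a uniform (not just pointwise) continuity/convergence estimate. Using $\Lambda_{\mu,d}(\x)^{-1}=\v_d(\x)^T\M_d(\mu)^{-1}\v_d(\x)=\kappa_{\mu,d}(\x,\x)$ and likewise for $\mu_n$, write
\begin{align*}
\kappa_{\mu_n,d}(\x,\x)-\kappa_{\mu,d}(\x,\x)=\v_d(\x)^T\big(\M_d(\mu_n)^{-1}-\M_d(\mu)^{-1}\big)\v_d(\x),
\end{align*}
so $\sup_{\x\in\B_R}|\kappa_{\mu_n,d}(\x,\x)-\kappa_{\mu,d}(\x,\x)|\le \big(\sup_{\x\in\B_R}\Vert\v_d(\x)\Vert^2\big)\,\Vert\M_d(\mu_n)^{-1}-\M_d(\mu)^{-1}\Vert\to0$. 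Moreover $\kappa_{\mu,d}(\x,\x)\ge 1$ everywhere (the constant polynomial $P\equiv1$ is feasible in Theorem~\ref{th:christoffel}, giving $\Lambda_{\mu,d}\le\mu(\RR^p)=1$; more simply $\kappa_{\mu,d}(\x,\x)=\sum_\alpha P_\alpha(\x)^2\ge P_0^2=1/\mu(\RR^p)$), and similarly $\kappa_{\mu_n,d}(\x,\x)\ge1$ once $\M_d(\mu_n)$ is invertible. Hence $\Lambda=\kappa^{-1}\in(0,1]$ on both sides, and from $|a^{-1}-b^{-1}|=|a-b|/(ab)\le|a-b|$ when $a,b\ge1$ we get $\sup_{\x\in\B_R}|\Lambda_{\mu_n,d}(\x)-\Lambda_{\mu,d}(\x)|\le\sup_{\x\in\B_R}|\kappa_{\mu_n,d}(\x,\x)-\kappa_{\mu,d}(\x,\x)|\to0$. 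Combining the two regions: for any $\varepsilon>0$, for $n$ large on the probability-one event, $\sup_{\x\in\RR^p}|\Lambda_{\mu_n,d}(\x)-\Lambda_{\mu,d}(\x)|\le 2\varepsilon$, which is (\ref{eq:discretization}); (\ref{eq:discretization22}) is just a restatement.

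The main obstacle is the uniform-in-$\x$ control far from $S$, i.e.\ Lemma~\ref{lem:prelimDiscrete}: pointwise decay of each Christoffel function is classical, but one needs a bound on $\sup_{\Vert\x\Vert\ge R}\Lambda_{\mu_n,d}(\x)$ that is uniform over all large $n$ simultaneously, which is what lets us choose $R$ once and for all before sending $n\to\infty$. I expect this to be delivered by constructing, for each $\x$ with $\Vert\x\Vert$ large, an explicit test polynomial in Theorem~\ref{th:christoffel} whose $\mu$- and $\mu_n$-integrals are both small and estimating those integrals through the convergence $\M_d(\mu_n)\to\M_d(\mu)$ together with compactness of $S$; the rest of the argument is the elementary two-region splicing above.
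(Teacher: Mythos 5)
Your proposal is correct and follows essentially the same route as the paper's proof: split $\RR^p$ into a compact neighborhood $V_\epsilon$ of the support $S$ and its complement, control the complement via Lemma \ref{lem:prelimDiscrete} (which the paper proves with an explicit affine test function in the extremal characterization, and which bounds $\Lambda_{\nu,d}$ for \emph{any} probability measure $\nu$ supported in $S$ --- so it applies to $\mu_n$ deterministically for every $n$, no appeal to $\M_d(\mu_n)\to\M_d(\mu)$ is needed there), and on $V_\epsilon$ combine the strong law of large numbers with continuity of matrix inversion and the identity $\Lambda_{\mu,d}(\x)^{-1}=\v_d(\x)^T\M_d(\mu)^{-1}\v_d(\x)$; your observation that $\kappa_{\mu,d}(\x,\x)\geq 1$ and $\kappa_{\mu_n,d}(\x,\x)\geq 1$ for probability measures is in fact a slightly cleaner way to bound the denominator than the paper's eigenvalue argument. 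The only ingredient of the paper's proof you omit is the preliminary measurability argument (done there via semi-algebraicity of $\X\mapsto\sup_{\z}|\Lambda_{\mu,d}(\z)-\Lambda_{\X,d}(\z)|$) showing that $\Vert\Lambda_{\mu_n,d}-\Lambda_{\mu,d}\Vert_\infty$ is a genuine random variable, a point easily patched in your scheme by replacing the supremum over $\RR^p$ with one over a countable dense subset, since all functions involved are continuous.
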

A detailed proof can be found in Section \ref{appendix}.
Theorem \ref{th:discretization} is a strong result which states a highly desirable property, namely that {\it almost surely}
with respect to the random draw of the sample, the (random) function $\Lambda_{\mu_n,d}(\cdot)$ converges to $\Lambda_{\mu,d}(\cdot)$ {\it uniformly in $\x\in\R^p$} as $n$ increases. Since we manipulate polynomials, it can be checked that \cite[Theorem 1]{cuevas2006plugin} for general level sets can be applied in the setting of Christoffel level set estimation. We get the following consequence in terms of consistency of the boundary of plugin estimates for Christoffel level sets.

\begin{thm}
	\label{th:discrLevelSet}
	Let $\mu$ be a probability measure on $\RR^p$ with compact support. Let $\left\{ X_i \right\}_{i \in \NN}$ be a sequence of i.i.d. $\R^p$-valued random variables with common distribution $\mu$. For $n = 1, 2, \ldots$, define the (random) empirical probability measure $\mu_n = \frac{1}{n}\sum_{i=1}^n \delta_{X_i}$. Then, for every $d \in \NN$, $d>0$, such that the moment matrix $\M_d(\mu)$ is invertible and any $c \in (0,\sup_{\x \in \RR^p} \{\Lambda_{\mu,d}(\bx)\})$, as $n$ increases, it holds that
	\begin{align}	
		\label{eq:discretization2}
		d_H(\partial L_n, \partial L)\quad \overset{a.s.}{\longrightarrow} \quad0,
	\end{align}
	where $L = \left\{ \x \in \RR^p, \Lambda_{\mu,d}(x) \geq c \right\}$ and $L_n = \left\{ \x \in \RR^p, \Lambda_{\mu_n,d}(x) \geq c \right\}$.
\end{thm}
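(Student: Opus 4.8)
\emph{Proof plan.} This statement is an instance of the general plug-in level-set consistency result of Cuevas, Gonz\'alez-Manteiga and Rodr\'iguez-Casal \cite{cuevas2006plugin}, specialised to $f := \Lambda_{\mu,d}$ and $\hat f_n := \Lambda_{\mu_n,d}$ (so that $L = \{f\ge c\}$ and, in the obvious intended reading of the second displayed set in the statement, $L_n = \{\hat f_n \ge c\}$). The plan is to check that the hypotheses of \cite[Theorem 1]{cuevas2006plugin} hold here: a uniform almost-sure convergence $\Vert \hat f_n - f\Vert_\infty \to 0$, which is already available, together with some elementary regularity properties of $f$ and of its level set $L$, all of which follow from the explicit polynomial form of $\kappa_{\mu,d}$.

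First I would record the needed analytic facts about $\kappa_{\mu,d}(\cdot,\cdot)$ and $f = \kappa_{\mu,d}(\cdot,\cdot)^{-1}$. Since $\M_d(\mu)$ is invertible, $\x \mapsto \kappa_{\mu,d}(\x,\x) = \v_d(\x)^T \M_d(\mu)^{-1}\v_d(\x)$ is a polynomial of degree $2d \ge 2$; it is everywhere $\ge 1$ (from $\kappa_{\mu,d}(\x,\x) = \sum_{\alpha\in\N^p_d} P_\alpha(\x)^2$ and the fact that the constant orthonormal polynomial equals $1$ because $\mu$ is a probability measure), and it is coercive since $\kappa_{\mu,d}(\x,\x) \ge \lambda_{\min}(\M_d(\mu)^{-1})\Vert\v_d(\x)\Vert^2 \to \infty$ as $\Vert\x\Vert\to\infty$. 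Hence $f$ is real-analytic (in particular continuous and bounded, with values in $(0,1]$) and vanishes at infinity, so for every $c>0$ the superlevel set $L = \{\x : \kappa_{\mu,d}(\x,\x)\le 1/c\}$ is compact. Moreover $\{f = c\}$ is the real zero set of the non-constant polynomial $\kappa_{\mu,d}(\cdot,\cdot) - 1/c$, hence a nowhere dense, Lebesgue-null compact semialgebraic set; and since the polynomial $\kappa_{\mu,d}(\cdot,\cdot)$ has only finitely many critical values, for every $c$ with $1/c$ a regular value --- that is, for all but finitely many $c\in(0,\sup_\x f(\x))$ --- the set $\{f=c\}$ is a non-empty smooth compact hypersurface with $\partial L = \{f=c\}$ and $L = \mathrm{cl}(\mathrm{int}\,L)$. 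These are exactly the ``standardness''/boundary-regularity requirements under which \cite[Theorem 1]{cuevas2006plugin} delivers Hausdorff convergence of the sets \emph{and} of their boundaries.

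Next I would supply the stochastic input. For $n$ large enough the empirical moment matrix $\M_d(\mu_n)$ is almost surely invertible (its entries converge a.s. to those of the invertible matrix $\M_d(\mu)$ by the strong law of large numbers), so $\hat f_n = \Lambda_{\mu_n,d}$ is eventually well defined; and Theorem \ref{th:discretization} gives $\Vert \hat f_n - f\Vert_\infty \to 0$ almost surely, which is precisely the uniform-convergence hypothesis of the plug-in theorem. Combining this with the regularity of $L$ established above and invoking \cite[Theorem 1]{cuevas2006plugin} yields $d_H(L_n,L)\to 0$ and $d_H(\partial L_n,\partial L)\to 0$ almost surely, as claimed.

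The main obstacle is the regularity verification of the second paragraph: one has to make sure the polynomial structure of $\kappa_{\mu,d}$ really does furnish the hypotheses of \cite[Theorem 1]{cuevas2006plugin} needed for the \emph{boundary} convergence (as opposed to merely the convergence of the sets) --- in particular that $\partial L$ is approachable from both sides and that level sets at values near $c$ remain near $\partial L$ --- and to treat, or rule out, the exceptional non-regular levels $c$; this is where the semialgebraic geometry of $\kappa_{\mu,d}$ (Lebesgue-nullity of its level sets, finiteness of its critical values) is indispensable. The remaining ingredients --- almost-sure eventual invertibility of $\M_d(\mu_n)$ and the appeal to Theorem \ref{th:discretization} --- are routine.
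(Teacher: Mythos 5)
Your proposal follows essentially the same route as the paper: the paper offers no separate proof of this theorem, merely noting just before its statement that, since one manipulates polynomials, \cite[Theorem 1]{cuevas2006plugin} applies once the uniform almost-sure convergence of Theorem \ref{th:discretization} is in hand. Your verification of the regularity hypotheses (positivity and coercivity of $\kappa_{\mu,d}(\x,\x)$, compactness of the level set, semialgebraic structure) is in fact more detailed than anything the paper provides, and your caveat about exceptional non-regular levels $c$ flags a genuine subtlety that the paper's statement, which claims the result for every $c\in(0,\sup_{\x}\Lambda_{\mu,d}(\x))$, silently glosses over.
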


\section{Applications}
\label{sec:applications}
\subsection{Rationale}
In this section we describe some applications for which properties of the Christoffel function prove to be very useful in a statistical context. We only consider the case of bounded support. A relevant property of the scaled Christoffel function is that it encodes information on the support and the density of a population measure $\mu$:
\begin{itemize}
	\item \cite{mate1991szego}, \cite{totik2000asymptotics} and \cite{kroo2012christoffel} provide asymptotic results involving the density of the input measure.
	\item Theorem \ref{th:determinist} provides asymptotic results related to the support of the input measure.
\end{itemize}
The support and density of a measure is of interest in many statistical applications. However, the aforementioned results are limited to population measures which are not accessible in a statistical setting. In the context of empirical Christoffel functions, Theorem \ref{th:discretization} suggests that these properties still hold (at least in the limit of large number of samples) when one uses the empirical measure $\mu_n$ in place of the population measure $\mu$. Combining these ideas suggests to use of the empirical Chritoffel function in statistical applications such as (a) density estimation, (b) support inference or (c) outlier detection. This is illustrated on simulated and real world data and we compare the performance with well established methods for the same purpose. Finally, we also describe another application, namely {\it inversion of affine shuffling}, whose links with statistics are less clear.

All results presented in this section are mainly for illustrative purposes. In particular, the choice of the degree $d$ as a function of the sample size $n$ was done empirically and a precise quantitative analysis is a topic of future research beyond the scope of the present paper.
\subsection{Density estimation}
\label{sec:applDensity}
\begin{figure}[t]
	\centering
	\includegraphics[width=\textwidth]{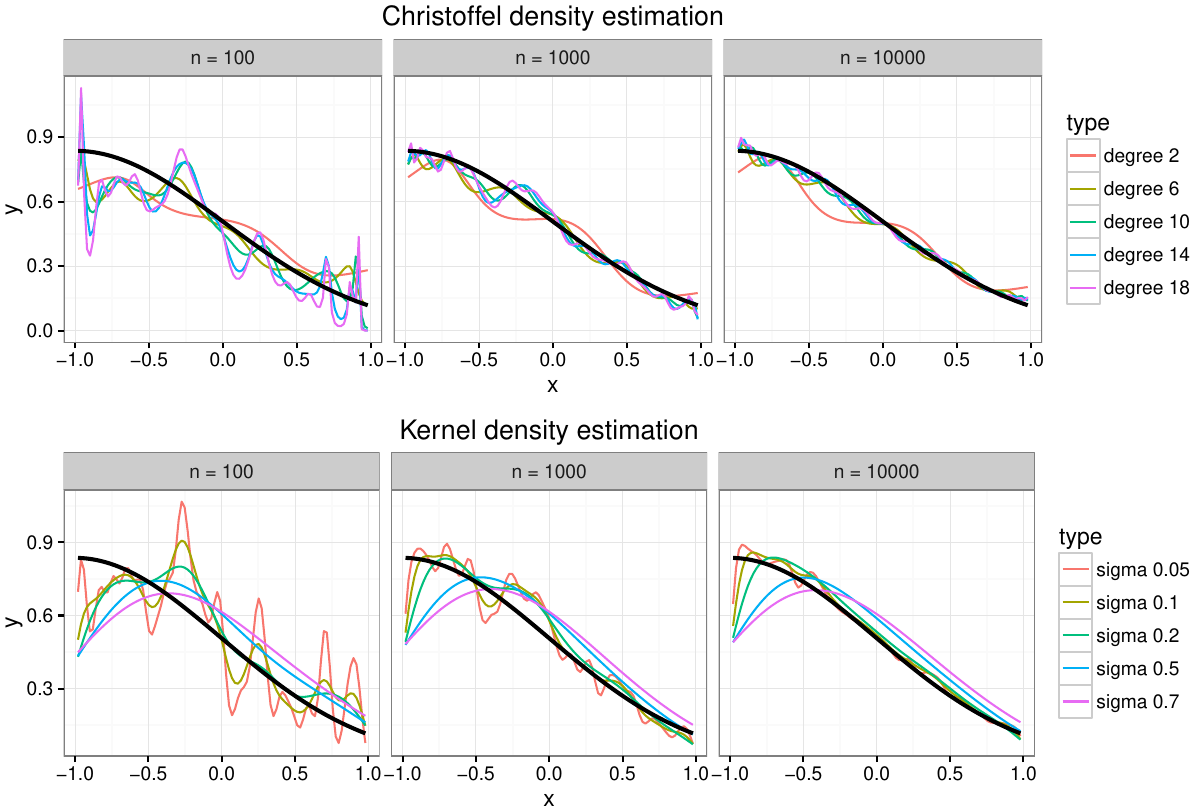}
	\caption{Comparison of Christoffel and kernel density estimation with Gaussian kernel. The same samples are used in both cases. We vary the sample size $n$, the degree $d$ for the Christoffel function and the scale parameter $\sigma$ for the Gaussian kernel. The black curve shows the population density.}
	\label{fig:densityEst}
\end{figure}
Most asymptotic results regarding the scaled Christoffel function suggest that the limiting behaviour involves the product of a boundary effect term and a density term. Hence if one knows both the Christoffel function and the boundary effect term, one has access to the density term. Unfortunately, this boundary term is only known in specific situations, the most typical example being the Euclidean ball. Hence, in the present state of knowledge, one of the following is assumed to hold true.
\begin{itemize}
	\item The support of the population measure $\mu$ is $S$ and $\lim_{d\to \infty}s(d)\Lambda_{\lambda_S,d}$ exists (possibly unknown).
	\item The support is unknown but contains a set $S$ with the same property as above. In this case, we consider the restriction of the population measure $\mu$ to $S$. Note that a sample from the restriction is easily obtained from a sample from $\mu$ by rejection.
\end{itemize}
In both cases, assuming that $\mu$ has a density $h$ on $S$, it is expected that the ratii $\frac{\Lambda_{\mu,d}}{\Lambda_{\lambda_S,d}}$ or $\frac{s(d)\Lambda_{\mu,d}}{\lim_{d\to\infty} s(d)\Lambda_{\lambda_S,d}}$, converge to $h$. An example of such a result in the univariate setting is the following.

\begin{thm}[Theorem 5 \cite{mate1991szego}]
	\label{th:density}
	Suppose that $\mu$ is supported on $[-1,1]$ with density $h\geq a>0$. Then for almost every $x \in [-1,1]$,
	\begin{align*}
		\lim_{d \to \infty}d \Lambda_{\mu,d}(x) \to \pi h(x) \sqrt{1 - x^2}.
	\end{align*}
\end{thm}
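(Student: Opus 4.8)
This is the classical asymptotic of M\'at\'e--Nevai--Totik, and I would derive it from the variational characterization of $\Lambda_{\mu,d}$ in Theorem~\ref{th:christoffel}, which in particular entails the order-reversing property: if $d\mu_1\le d\mu_2$ as measures on $[-1,1]$ then $\Lambda_{\mu_1,d}(x)\le\Lambda_{\mu_2,d}(x)$ for all $x$ and all $d$. Fix a Lebesgue point $x_0\in(-1,1)$ of $h$; since $h\ge a>0$, almost every $x_0$ is of this type (and lies in the open interval), so it suffices to handle such points. Write $x_0=\cos\theta_0$. The target is the two matching bounds $\limsup_d d\,\Lambda_{\mu,d}(x_0)\le \pi h(x_0)\sqrt{1-x_0^2}$ and $\liminf_d d\,\Lambda_{\mu,d}(x_0)\ge \pi h(x_0)\sqrt{1-x_0^2}$.

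\textbf{Upper bound.} Here one only needs a good feasible polynomial in (\ref{eq:christoffel}). Let $P_d\in\RR[X]_d$ be a polynomial approximate identity centered at $x_0$ at scale $1/d$, with $P_d(x_0)=1$ and with controlled (integrable) tails --- for instance the renormalized degree-$d$ reproducing kernel of the equilibrium/arcsine measure $\tfrac{1}{\pi\sqrt{1-x^2}}\,dx$, or a Jackson / de la Vall\'ee-Poussin type kernel in the variable $\theta-\theta_0$. A direct computation gives $\int_{-1}^1 P_d(x)^2\,dx=\bigl(\pi\sqrt{1-x_0^2}+o(1)\bigr)/d$, and since $x_0$ is a Lebesgue point of $h$ and the tails of $P_d^2$ are negligible, $\int_{-1}^1 P_d(x)^2 h(x)\,dx = h(x_0)\int_{-1}^1 P_d(x)^2\,dx + o(1/d)$. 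Feasibility of $P_d$ in (\ref{eq:christoffel}) then yields the upper bound.

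\textbf{Lower bound via localization --- the hard part.} Because $\Lambda_{\mu,d}$ is a \emph{minimum}, the lower bound cannot come from a single test function; one needs a localization principle stating that the asymptotics of $d\,\Lambda_{\mu,d}(x_0)$ depend only on $\mu$ restricted to an arbitrarily small neighbourhood $I\ni x_0$, provided the density is bounded (above and below) on $I$ --- which holds near a Lebesgue point since $h\ge a$. I would establish this with polynomial ``peaking'' estimates (Remez- and Nikolskii--Bernstein-type inequalities) showing that any $P_d$ almost optimal in (\ref{eq:christoffel}) must carry all but an $o(1)$ fraction of its $L^2(\mu)$ mass inside $I$; consequently, replacing $\mu$ by a constant-density measure $c\,\lambda$ on $I$ (and leaving it unchanged outside) perturbs $d\,\Lambda_{\mu,d}(x_0)$ by $o(1)$, where $c$ may be taken as close as desired to $h(x_0)$ as $I$ shrinks (Lebesgue point). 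Combined with the order-reversing property this sandwiches $\liminf_d d\,\Lambda_{\mu,d}(x_0)$ between $\pi\bigl(h(x_0)\mp\varepsilon\bigr)\sqrt{1-x_0^2}$, once the constant-density model case is settled. That model case, $d\,\Lambda_{c\lambda,d}(x_0)\to \pi c\sqrt{1-x_0^2}$, follows from the explicit Darboux/Heine asymptotics for Legendre (or Jacobi) polynomials, by summing their squares through the Christoffel--Darboux formula. I expect the localization step to be the genuine obstacle: quantifying, uniformly as $d\to\infty$ and as $I$ shrinks, that a near-minimizer cannot afford mass away from $x_0$, while assuming only that $x_0$ is a Lebesgue point rather than a point of continuity of $h$.

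\textbf{Alternatives.} Two standard variants shorten the bookkeeping: (i) transplant everything to the unit circle, where $\sqrt{1-x^2}$ becomes the flat Lebesgue density, the Szeg\H{o} condition $\int\log h>-\infty$ is automatic from $h\ge a$, and one invokes the circle form of the theorem (proved by the same localization idea together with the Szeg\H{o} function); (ii) use Lubinsky's universality method --- derive the scaling limit of the Christoffel--Darboux kernel (the sine kernel) from the upper bound above plus a comparison inequality built on the Christoffel--Darboux formula and monotonicity, then specialize to the diagonal $x=y=x_0$, which gives the Christoffel function limit directly.
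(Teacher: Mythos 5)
First, a point of comparison: the paper does not prove this statement at all --- it is imported verbatim as Theorem 5 of M\'at\'e--Nevai--Totik \cite{mate1991szego} and used only as background for the density-estimation application, so there is no in-paper argument to measure your proposal against; it can only be judged as a self-contained proof of the classical result.

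Judged that way, your outline reproduces the standard architecture (upper bound by a localized test polynomial at a Lebesgue point, lower bound by a localization principle, explicit asymptotics for the model measure, or transplantation to the circle / Lubinsky's comparison method), but it contains a genuine gap exactly where you yourself flag it: the lower bound. The assertion that Remez- and Nikolskii--Bernstein-type peaking estimates force any near-minimizer of (\ref{eq:christoffel}) to concentrate its $L^2(\mu)$ mass in a shrinking interval $I\ni x_0$, uniformly enough that one may replace $\mu$ on $I$ by $(h(x_0)-\varepsilon)\lambda$ with only an $o(1/d)$ error, is precisely the content of the M\'at\'e--Nevai--Totik theorem and is not a routine consequence of those inequalities. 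Note that the simple order-reversing comparison you invoke only goes the wrong way here: $h\ge a$ gives $d\,\Lambda_{\mu,d}(x_0)\gtrsim \pi a\sqrt{1-x_0^2}$, but to upgrade the constant $a$ to $h(x_0)$ by monotonicity you would need $h\ge h(x_0)-\varepsilon$ \emph{pointwise} on $I$, which a Lebesgue-point hypothesis does not provide. The known proofs bridge this by Szeg\H{o}-function asymptotics on the unit circle (MNT) or by Lubinsky's universality comparison with a regular reference measure, both of which involve substantially more than the sketch supplies; your ``Alternatives'' paragraph names these routes but does not carry either out. The upper bound half is essentially fine (a Fej\'er/de la Vall\'ee-Poussin-type kernel at scale $1/d$ does handle Lebesgue points), so the proposal stands as a correct roadmap of the literature rather than a proof: the decisive localization step is named, not established.
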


Extensions include \cite{totik2000asymptotics} for general support and \cite{kroo2012christoffel} for the multivariate setting. Combining Theorems \ref{th:density} and \ref{th:discretization} suggest that the empirical Christoffel function can be used for density estimation. For illustration purposes, we set $\mu$ to be the restriction of a Gaussian to $[-1,1]$. We perform the following experiment for given $n,d \in \NN$.
\begin{itemize}
	\item Generate $x_1, \ldots, x_n \in [-1,1]$ sampled independently  from $\mu$.
	\item Compute and plot $x \to \frac{\Lambda_{\mu_n,d}(x)}{\Lambda_{\lambda_S,d}(x)}$. Note that $\Lambda_{\lambda_S,d}$ is easily derived from the moments of the uniform distribution on $[-1,1]$.
\end{itemize}
The result is presented in Figure \ref{fig:densityEst} and a comparison is given with a classical technique, kernel density estimation \cite{rosenblatt1956remark,parzen1962estimation} with the Gaussian kernel. The result suggest that empirical Christoffel based density estimation is competitive with kernel density estimation in this setting. It is worth noticing how simple the methodology is with a single parameter to tune.

\subsection{Support inference}
\label{sec:applSupport}
\begin{figure}[t]
	\centering
	\includegraphics[width=.8\textwidth]{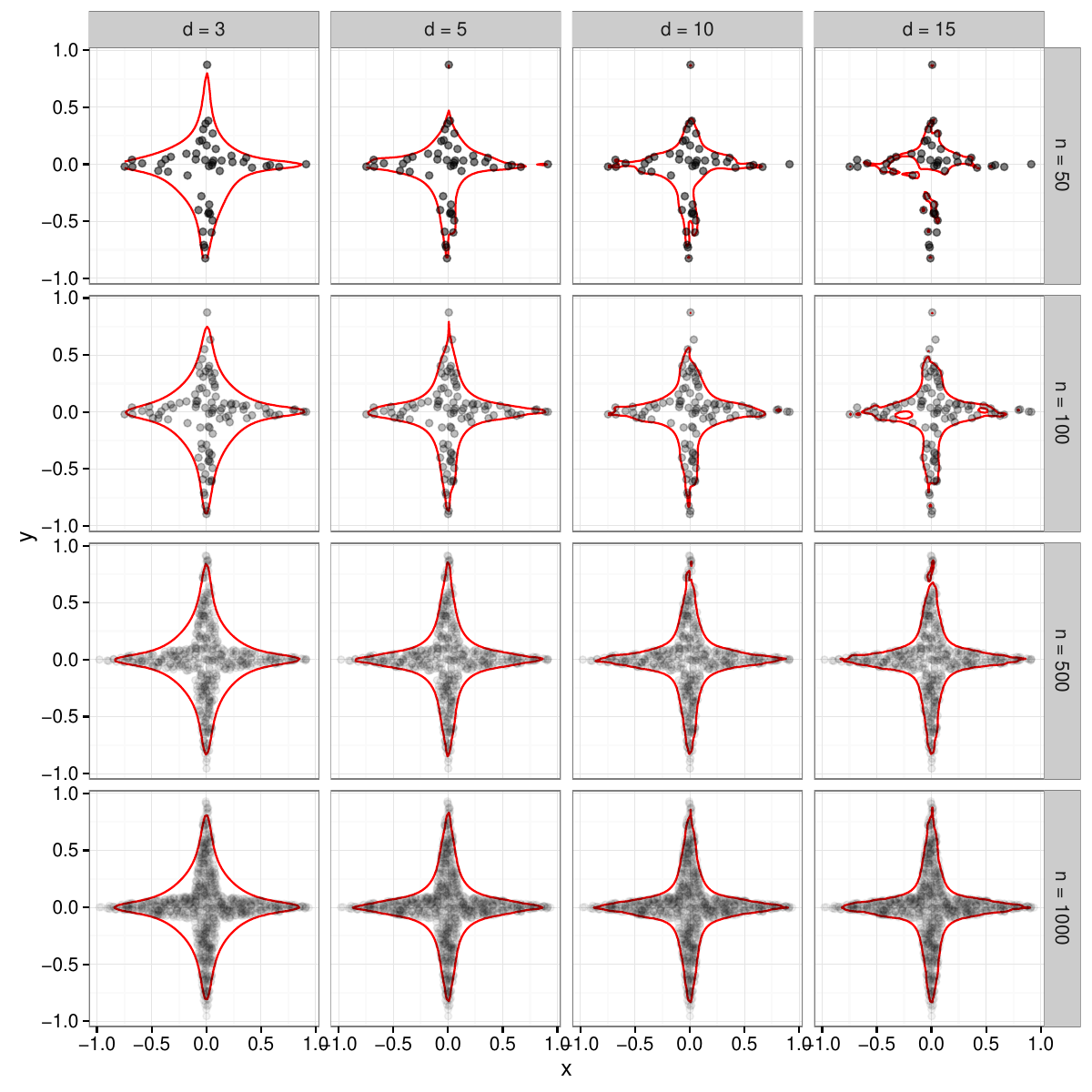}
	\caption{Finie sample from the uniform measure over a star shaped domain in $\RR^2$. For each value of $n$ and $d$, the red line represents the sublevel set $\left\{\x \in \RR^2,\,s(d)\Lambda_{\mu_n,d}(\x) = \alpha(\delta) \right\}$, where $\delta = 0.5$ and $\alpha$ is given in Assumption \ref{ass:dk}.}
	\label{fig:suppInf}
\end{figure}
Combining Theorems \ref{th:determinist} and \ref{th:discretization} suggest that one may recover the unknown support of a population measure $\mu$ from $n$ independant samples by thresholding the scaled empirical Christoffel function. In this section we set $\mu$ to be the uniform probability measure over a star shaped domain in $\RR^2$ (see Figure \ref{fig:suppInf}). For different values of the degree $d$ and sample size $n$, we plot in Figure \ref{fig:suppInf} the corresponding sample and the associated level set
$\left\{\x \in \RR^2:\,s(d)\Lambda_{\mu_n,d}(\x) = \alpha(\delta) \right\}$, where $\delta = 0.5$ and $\alpha$ is given in Assumption \ref{ass:dk}. 

The results displayed in Figure \ref{fig:suppInf} show that for well chosen values of $d$ and with $\alpha$ 
as in Assumption \ref{ass:dk}, the support 
of the population measure is rather well approximated from a finite independent sample. The results even suggest that a careful tuning of the degree $d$ and the threshold level set $\alpha$ allows to approximate the support extremely well for larger sample sizes. Of course the degree $d$ should be chosen to avoid a form of over-fitting as the results suggest for small sample sizes and large values of $d$. A precise analysis of this phenomenon is a topic of future research.

\subsection{Outlier detection}
\label{sec:applOutlier}

\begin{table}
	\centering
	\begin{tabular}{|c|cccccc|}
		\hline
		\multicolumn{7}{|c|}{\bf 1SVM}\\\hline
		\diagbox{$\sigma$}{$\nu$} &0.005&0.01&0.02&0.05&0.1&0.2\\\hline
		0.01	&10&17&17				&17				&15				&11				\\
		0.02	&2&17	&{\bf 18}	&17				&15				&12				\\
		0.05	&8&1 	&14				&{\bf 18}	&15				&11				\\
		0.1		&9&8	&12				&17				&14				&11				\\
		0.2		&7&9	&8				&17				&14				&13				\\
		0.5		&3&5	&9				&15				&17				&16				\\
		1			&3&6	&9				&14				&{\bf 19}	&{\bf 18}	\\
		2			&4&4	&5				&1				&{\bf 18}	&{\bf 18}	\\
		5			&4&3	&4				&9				&12				&16				\\
		10		&5&4	&4				&7				&10				&15				\\\hline
	\end{tabular}
	\;
	\begin{tabular}{|c|c|}
		\hline
		\multicolumn{2}{|c|}{\bf Christoffel}\\\hline
		$d$&AUPR\\\hline
		1&8\\
		2&{\bf 18}\\
		3&{\bf 18}\\
		4&16\\
		5&15\\
		6&13\\\hline
	\end{tabular}
	\;
	\begin{tabular}{|c|c|}
		\hline
		\multicolumn{2}{|c|}{\bf KDE}\\\hline
		$\sigma$&AUPR\\\hline
		0.01&8\\
		0.02&1\\
		0.05&13\\
		0.1&13\\
		0.2&12\\
		0.5&5\\
		1&4\\\hline
	\end{tabular}
	\vspace{.1in}
	\caption{AUPR scores ($\times 100$) for the network intrusion detection task for the three different methods considered in this paper. 1SVM corresponds to one-class SVM with Gaussian kernel and varying kernel scale parameter $\sigma$ and SVM parameter $\nu$. Christoffel corresponds to the empirical Christoffel function with varying degree $d$. KDE corresponds to kernel density estimation with Gaussian kernel and varying scale parameter $\sigma$. The best scores are higlighted in boldface font.}
	\label{tab:outlierDetection}
\end{table}

~In \cite{lasserre2016sorting} we suggested 
that the empirical Christoffel function could be used for the purpose of detecting outliers and the claim was supported by some numerical experiments. The rationale for this is that the empirical Christoffel function encodes information about the population density and outliers can be seen as samples from low density areas. We follow the same line and consider the network intrusion detection task described in \cite{williams2002comparative} based on the KDD cup 99 dataset \cite{lichmanUCI2013}. Following the pre-processing described in \cite{williams2002comparative, lasserre2016sorting}, we build up five datasets consisting of network connections represented by labeled vectors in $\RR^3$ where each label indicates wether the connection was an attack or not.
\begin{center}
	\begin{tabular}{c|c|c|c|c|c}
		Dataset&http&smtp&ftp-data&ftp&other\\\hline
		Number of examples&567498&95156&30464&4091&5858\\
		Proportions of attacks&0.004&0.0003&0.023&0.077&0.016
	\end{tabular}
\end{center}
All the experiments described in this section are performed on the ``other'' dataset which is the most heterogeneous. The main task is to recover attacks from the collection of points in $\RR^3$, ignoring the labels, and then compare the predictions with the ground thruth (given by the labels). We compare different methods, each of them assign a score to an individual, the higher the score, the more likely the individual is to be an outlier, or an attack. The metric that we use to compare different methods is the area under the Precision Recall curve (AUPR); see for example \cite{davis2006relationship}. We compare three different methods, each of them producing a score reflecting some degree of outlyingness.
\begin{itemize}
	\item Empirical Christoffel function.
	\item Kernel density estimation \cite{rosenblatt1956remark,parzen1962estimation} with Gaussian kernel. The value of the density estimated at each datapoint is used as an outlyingness score.
	\item One-class SVM \cite{scholkopf2001estimating} with Gaussian kernel. The value of the estimated decision function at each datapoint is used as an outlyingness score. We used the implementation provided in the \texttt{kernlab} package \cite{zeileis2004kernlab}.
\end{itemize}
The first two methods involve only a single parameter while the last method requires two parameters to be tuned. The results are given in Table \ref{tab:outlierDetection} and the corresponding curves can be found in Section \ref{sec:PROutlier}. Table \ref{tab:outlierDetection} suggest that one-class SVM and the empirical Christoffel perform similarly and clearly outperform the kernel density estimation approach. It is worth noticing here that the one-class SVM provides slightly better performances but requires a precise tuning of the second parameter. 

\subsection{Inversion of affine shuffling}
\begin{figure}[t]
	\centering
	\includegraphics[width=.6\textwidth]{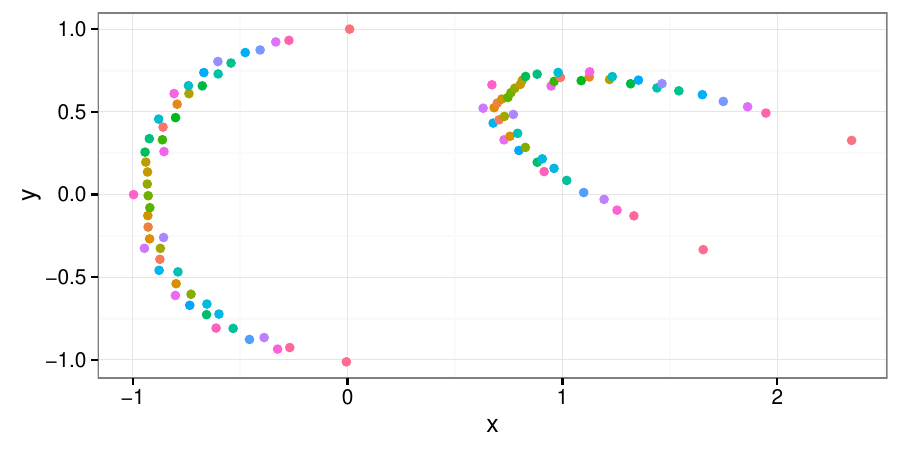}
	\includegraphics[width=.3\textwidth]{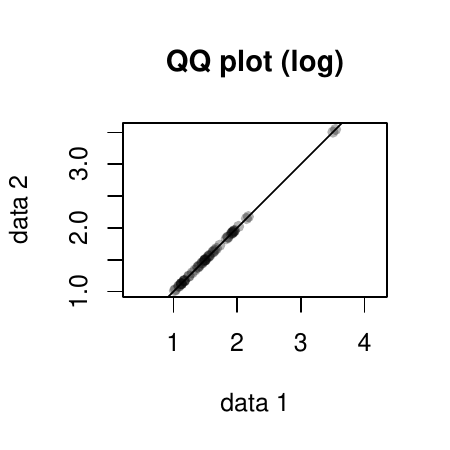}
	\caption{Illustration of the affine matching procedure based on the empircal Christoffel function. On the left, two datasets of points in $\RR^2$, the second one is the image of the first one by an affine transformation. The colours indicates the correspondance between the two clouds of points which have been recovered by matching the corresponding empirical Christoffel functions. The matching of these values is illustrated on the left with a quantile quantile plot of the empirical Chritoffel function values for each datasets. We applied a $\log$ transformation for readability and the first diagonal is represented.}
	\label{fig:affineMatching}
\end{figure}
This last application has fewer connections with statistics. Suppose that we are given two matrices $\X \in \RR^{p \times n}$ and $\X' \in \RR^{p \times n}$. Furthermore, we know that there exists an invertible affine mapping $\mathcal{A} \colon \RR^p \to \RR^p$ such that after a potential permutation of the columns, $\mathcal{A}$ defines a bijection between the columns of $\X'$ and those of $\X$. The problem is to recover the correspondence between the columns of $\X$ and the columns of $\X'$, whence the name ``affine shuffling inversion''. Note that the columns may be shuffled in an arbitrary way and therefore the matching problem is not trivial. In this setting we can use the affine invariance property of the Christoffel function described in Theorem \ref{th:invariance}. This is based on the two following observations.
\begin{itemize}
	\item The Christoffel function only depends on the empirical moments and hence is not sensitive to reshufling of the columns.
	\item Working with the affine image amounts to perform a change of basis and a change of origin. By Theorem \ref{th:invariance}, the evaluation of the Christoffel function does not change.
\end{itemize}
This suggests the following procedure.
\begin{itemize}
	\item Compute $\Lambda_{\X,d}$ and $\Lambda_{\X',d}$ the Christoffel functions associated to the columns of $\X$  and the columns of $\X'$ respectively.
	\item Set $A \in \RR^n$ to be the vector with $\Lambda_{\X,d}(\X_i)$ as $i$-th entry where $\X_i$ is the $i$-th column of $\X$. Set $A'$ similarly.
	\item Match the values in $A$ to the values in $A'$ according to their rank.
\end{itemize}
The proposed procedure defines a unique permutation between columns of $\X$ and columns of $\X'$ when there are no ties in the vectors $A$ and $A'$. In this case, Theorem \ref{th:invariance} ensures that we have found the correct correspondance. In case of ties, the procedure does not allow to elicit completely the correspondance matching. Overall, the method is not garanteed to work but allows to treat simple cases easily. Investigating the robustness of this procedure to noise or to matching mispecification is the subject of future research.

An illustration is given in Figure \ref{fig:affineMatching} where a moon shaped cloud of points in $\RR^2$ is deformed by an affine transformation and the matching between the points between the two clouds is recovered by matching the corresponding Christoffel function values. The correspondance between Christoffel function values is illustrated on a quantile quantile plot.

\section{Conclusion}
In this paper we have investigated the potential of the empirical Christoffel function for some applications in statistics and machine learning. This question led us to investigate its theoretical properties as well as potential paths toward applications, mostly in a statistical framework.

On the theoretical side, we proposed two main contributions. The first one provides an explicit thresholding scheme which allows to use the Christoffel function to recover the support of a measure with strong asymptotic guarantees. Although this property finds its root in the long history of results regarding asymptotic properties of the Christoffel function, we have provided a systematic way to tune the threshold and the degree to ensure strong convergence guarantees. The second main contribution relates the empirical Christoffel function to its population counterpart in the limit of large samples. This type of results is new and paves the way toward a much more precise understanding of relations between these two objects in a small sample setting.

On the practical side, we have illustrated the relevance of the Christoffel function 
as a practical tool in a machine learning context. In particular, simulations and experiments on real world data support our claim that the empirical Christoffel function is potentially useful for density estimation, support inference and outlier detection. Finally, in another application outside the statistical framework (detection of affine matching between two clouds of points), we have again illustrated the potential of the Christoffel function as a tool in shape recognition and shape comparison. 

Both theory and applications suggest a broad research program. As already mentioned, an important issue is to quantify the deviation of the empirical Christofel function from its population counterpart in a finite sample setting. Results in this direction could have both theoretical and practical impacts and would  to compare more accurately the performance of Christoffel-based approaches with state-of-the art methods. Furthermore, the use of the Christoffel function in a statistical framework raises questions specific to each application considered in this paper and will be the subject of future investigations. Finally, there are still important open questions on the  Christoffel function itself and works in the line of \cite{berman2009bergman} are of great interest to address applications in statistics.

\section*{Acknowledgements}
The research of the first author was funded by the European Research Council (ERC) under the European's Union Horizon 2020 research and innovation program (grant agreement 666981 TAMING).
\section{Appendix}
\label{appendix}

\subsection{Precision recall curves from section \ref{sec:applOutlier}}
\label{sec:PROutlier}
This section displays the curves from which the AUPR scores were measured in Section \ref{sec:applOutlier}. Christoffel function and kernel density estimation are presented in Figure \ref{fig:PRChristo} and the one-class SVM is presented in Figure \ref{fig:PROneClass}. A detailed discussion the experiment is given in Section \ref{sec:applOutlier}.
\begin{figure}[h!]
	\centering
	\includegraphics[width=.45\textwidth]{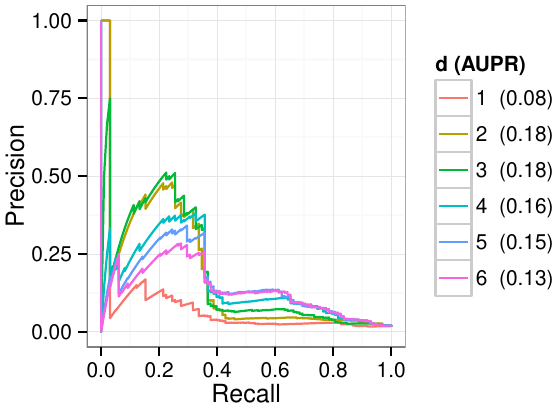}
	\includegraphics[width=.45\textwidth]{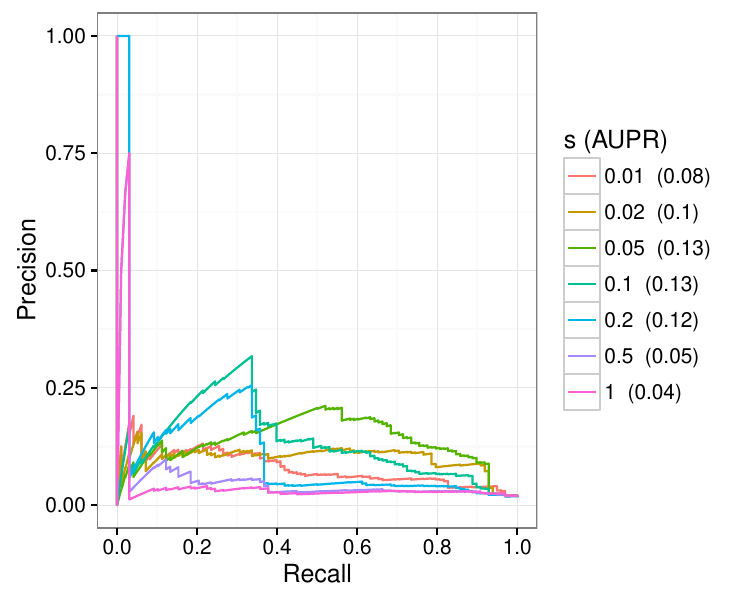}
	\caption{Precision recall curves for the network intrusion detection task. Left: Christoffel function with varying degree $d$. Right: kernel density estimation with Gaussian kernel and varying scale parameter $\sigma$.}
	\label{fig:PRChristo}
\end{figure}
\begin{figure}[h!]
	\centering
	\includegraphics[width=.9\textwidth]{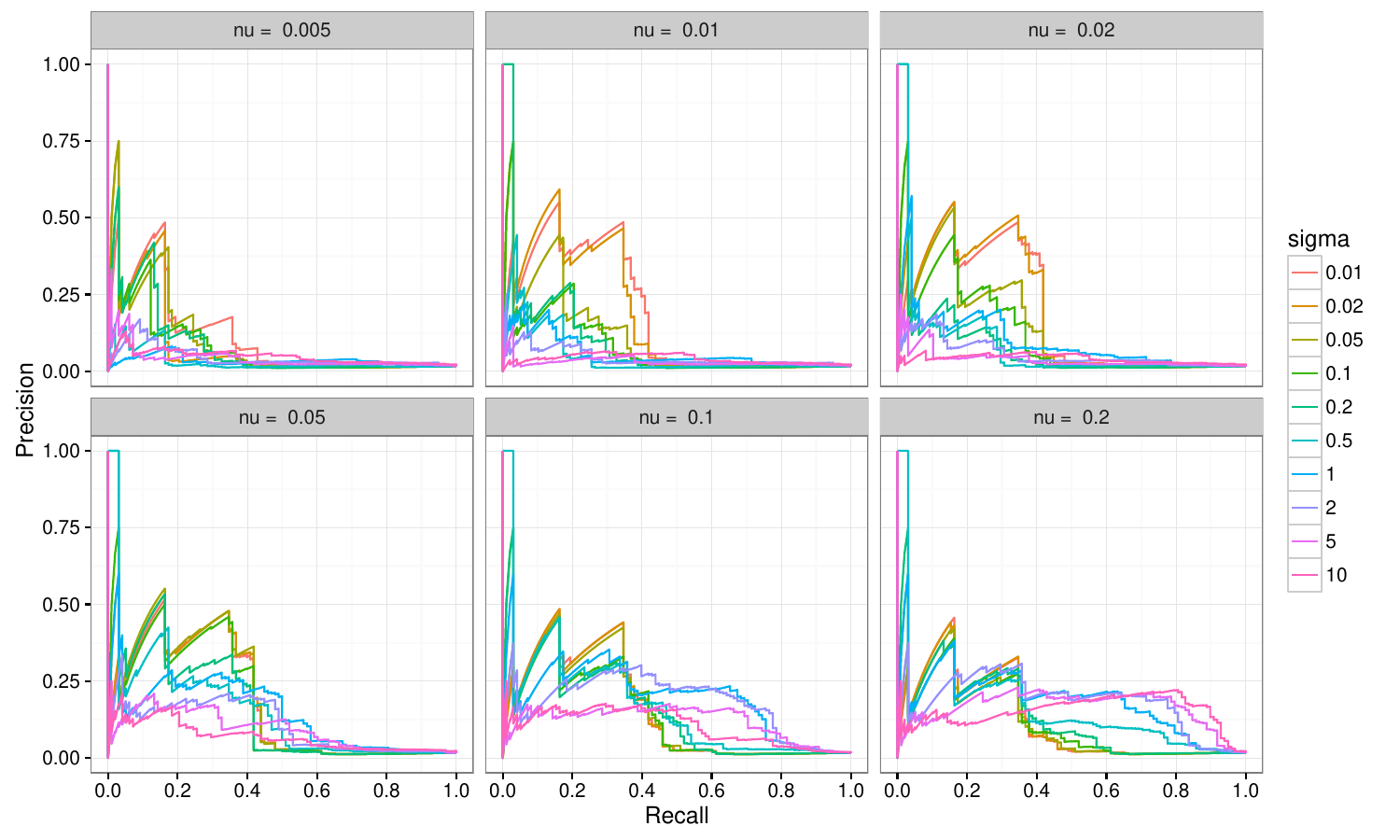}
	\caption{Precision recall curves for the network intrusion detection task. The method used is the one-class SVM with a Gaussian kernel. We vary the scale parameter $\sigma$ and the SVM parameter $\nu$. We used the SVM solver of the package \cite{zeileis2004kernlab}.}
	\label{fig:PROneClass}
\end{figure}

\subsection{Proof of Theorem \ref{th-christoffel}}
 
\begin{proof}
In the optimization problem (\ref{eq:christoffel}) the objective function
$P\mapsto \int P^2d\mu$ is strongly convex in the vector of coefficients of $P$ because
\[\int P^2\,d\mu\,=\,P^T \M_d(\mu)\,P\quad\mbox{and}\quad \M_d(\mu)\succ0,\]
and therefore (\ref{eq:christoffel}) reads $\min \{P^T\M_d(\mu)\,P: P^T\v_d(\xi)=1\}$, which is a convex optimization problem with a strongly convex objective function.  Slater's condition holds (only one linear equality constraint)
and so the Karush-Kuhn-Tucker (KKT) -optimality conditions are both necessary and sufficient. 
At an optimality solution $P^*_d$ they read:
\[P^*_d(\bxi)=1;\quad 2\M_d(\mu)\,P^*_d\,=\,\theta\, \v_d(\xi),\]
for some scalar $\theta$. Multiplying by $(P^*_d)^T$ yields
\[2\kappa_{\mu,d}(\bxi,\bxi)^{-1}\,=\,2(P^*_d)^T\M_d(\mu)P^*_d\,=\,\theta.\]
Hence necessarily
\[P^*_d(X)\,=\,\v_d(X)^T\,P^*_d\,=\,\frac{\theta}{2}\,\v_d(X)^T\M_d(\mu)^{-1}\,\v_d(\bxi)\,=\,
\frac{\kappa_{\mu,d}(X,\bxi)}{\kappa_{\mu,d}(\bxi,\bxi)},\]
which is (\ref{pstar}).
Next, let $\e_\alpha\in\RR^{s(d)}$ be the vector with null coordinates except the entry $\alpha$ which is $1$. From the definition of the moment matrix $\M_d(\mu)$,
\[\e_\alpha^T\M_d(\mu)\,P^*_d\,=\,
\int \x^\alpha P^*(\z)\,d\mu(\z)\,=\,\kappa_{\mu,d}(\bxi,\bxi)^{-1}\e_\alpha^T\v_d(\bxi)
\,=\,\kappa_{\mu,d}(\bxi,\bxi)^{-1}\,\bxi^\alpha,\]
which is (\ref{th-christoffel-2}). In particular with $\alpha:=0$, we recover (\ref{th-christoffel-1}),
\[\int P^*_d(\x)\,d\mu(\x)\,=\,\kappa_{\mu,d}(\bxi,\bxi)^{-1}\,=\,\int P^*_d(\x)^2\,d\mu(\x).\]
\end{proof}

\subsection{Proof of Theorems \ref{th:determinist} and \ref{th:determinist2}}

\subsubsection{Lower bound on the Christoffel function inside $S$}

We will heavily rely on results from \cite{bos1994asymptotics} (note that similar results could be obtained on the box, see for example \cite{xu1995christoffel}). In particular, we have the following result.
\begin{lem}
	\label{lem:lowerBoundInPrelim}
	We have for any $d \geq 2$ 
	\begin{align*}
		\frac{\kappa_{\lambda_\B,d}(0,0)}{s(d)} \leq\frac{1}{\omega_p} \frac{(d+p+1)(d+p+2)}{(d+1)(d+2)}\left( 1 + \frac{d+p+3}{d+3} \right)
	\end{align*}
\end{lem}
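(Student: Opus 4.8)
The plan is to obtain an explicit upper bound on the Christoffel--Darboux kernel $\kappa_{\lambda_\B,d}(0,0) = \sum_{\alpha \in \N^p_d} P_\alpha(0)^2$ at the origin for the Lebesgue measure on the unit ball $\B$, by exploiting the radial symmetry of $\lambda_\B$. First I would recall the classical fact (this is the substance of what is used from \cite{bos1994asymptotics}) that for a measure invariant under the orthogonal group, the reproducing kernel restricted to the center can be decomposed degree by degree: $\kappa_{\lambda_\B,d}(0,0) = \sum_{k=0}^{d} \kappa^{(k)}(0,0)$, where $\kappa^{(k)}$ is the reproducing kernel of the space of polynomials of degree exactly $k$ that are $L_2(\lambda_\B)$-orthogonal to lower degrees. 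Because only the constant in the expansion of a polynomial at $0$ contributes to $P_\alpha(0)$, and the relevant univariate weight on the radial variable for $\B$ in $\R^p$ is $r^{p-1}\,dr$ on $[0,1]$, each $\kappa^{(k)}(0,0)$ reduces to a one-dimensional extremal quantity governed by Jacobi-type (more precisely, Gegenbauer/Legendre-shifted) polynomials. The point value of the relevant normalized orthogonal polynomial at the endpoint, squared, is what appears, and this is known in closed form.

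Second, I would assemble the closed-form expression. Using the extremal characterization from Theorem \ref{th:christoffel} (value $1$ at the evaluation point, minimize the $L_2$ norm) applied to the sub-space of degree exactly $k$, one gets $\kappa^{(k)}(0,0)$ as an explicit ratio of Gamma functions / binomial coefficients in $k$ and $p$; this is exactly the content of the computation in \cite{bos1994asymptotics}. Then the bound claimed in the lemma should come out of summing these terms over $k = 0, \dots, d$ and estimating the sum. The telescoping/ratio structure of consecutive terms $\kappa^{(k+1)}(0,0)/\kappa^{(k)}(0,0)$ — which is a rational function of $k$ and $p$ — lets one bound the partial sum by a constant times the last one or two terms, and dividing by $s(d) = \binom{p+d}{d}$ and simplifying the Gamma ratios yields the stated expression $\frac{1}{\omega_p}\frac{(d+p+1)(d+p+2)}{(d+1)(d+2)}\left(1 + \frac{d+p+3}{d+3}\right)$, with the constant $\omega_p$ (the sphere surface area) entering through the normalization $\int_\B d\lambda = \vol(\B)$ and its relation to $\omega_p$.

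Concretely I would carry it out in this order: (i) state the orthogonal decomposition of $\kappa_{\lambda_\B,d}(0,0)$ into contributions from each total degree, citing \cite{bos1994asymptotics}; (ii) reduce the degree-$k$ contribution to a univariate endpoint value via the radial substitution, getting an explicit Gamma-ratio formula; (iii) bound the sum $\sum_{k\le d}$ by controlling the ratio of consecutive terms, showing it is dominated by its top terms; (iv) divide by $s(d)$, simplify, and check the inequality holds for all $d \ge 2$ (the restriction $d \ge 2$ presumably being needed so that the crude bound on the geometric-like tail is valid and all denominators are safely positive).

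I expect the main obstacle to be step (iii): getting a clean, fully explicit upper bound on the partial sum $\sum_{k=0}^d \kappa^{(k)}(0,0)$ that is tight enough to produce precisely the stated right-hand side rather than merely something of the same order. The individual terms grow polynomially in $k$ (like $k^{p}$ roughly, after normalization), so the sum is comparable to $d^{p+1}/(p+1)$ up to constants, and matching the constant factors against the closed form in the lemma requires careful bookkeeping of the Gamma-function ratios — this is where I would expect to lean most directly and literally on the estimates already carried out in \cite{bos1994asymptotics} rather than reproving them. A secondary, purely bookkeeping, obstacle is tracking the normalization conventions ($\lambda_\B$ versus $\mu_\B$, the factor $\vol(\B)$, and the precise relation $\vol(\B) = \omega_p / p$) so that $\omega_p$ appears with the right exponent.
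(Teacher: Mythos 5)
Your plan is essentially the paper's route: the paper's entire proof consists of citing Lemma~2 and the last equation in the proof of Lemma~3 of \cite{bos1994asymptotics} to get $\kappa_{\lambda_\B,d}(0,0) \leq \frac{1}{\omega_p}\bigl(\binom{p+d+3}{p} + \binom{p+d+2}{p}\bigr)$, and then dividing by $s(d)=\binom{p+d}{d}$ and cancelling factorials, which is exactly what your steps (i)--(iv) amount to once you lean on Bos for the radial/Gegenbauer computation. One correction to your heuristic in the ``obstacle'' paragraph: the degree-$k$ contributions to the kernel at the center are $O(k^{p-1})$ (the odd-degree ones vanish by central symmetry), so the sum is $\Theta(d^p)$, not $\sim d^{p+1}/(p+1)$; if your stated orders were right, $\kappa_{\lambda_\B,d}(0,0)/s(d)$ would grow linearly in $d$, contradicting the bounded right-hand side of the lemma (which tends to $2/\omega_p$). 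With the correct orders, your step (iii) is exactly Bos's estimate and the simplification in (iv) yields the stated expression.
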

\begin{proof}
	Combining Lemma 2 in \cite{bos1994asymptotics} and the last equation of the proof of Lemma 3 in \cite{bos1994asymptotics}, we have 
	\begin{align*}
		\kappa_{\lambda_\B,d}(0,0) \leq \frac{1}{\omega_p}\left( {p+d+3 \choose p} + {p+d+2 \choose p} \right).	
	\end{align*}
	The result follows by using the expression given for $s(d)$ and simplifying factorial terms.
\end{proof}

From this result, we deduce the following bound. 
\begin{lem}
	\label{lem:lowerBoundIn}
	Let $\delta > 0$ and $\x \in S$ such that ${\rm dist}(\x, \partial S) \geq \delta$. Then
	\begin{align*}
		s(d)\Lambda_{\mu_S,d}(\x) \geq \frac{\delta^p\omega_p}{\lambda(S)}\frac{(d+1)(d+2)(d+3)}{(d+p+1)(d+p+2)(2d + p + 6)}.
	\end{align*}
\end{lem}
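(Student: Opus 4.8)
The plan is to localize the argument to a Euclidean ball sitting inside $S$ and then exploit the affine invariance of the Christoffel function to transport that ball to the unit ball centered at the origin, where the estimate of Lemma \ref{lem:lowerBoundInPrelim} is available. As a preliminary step I would check that the hypotheses $\x \in S$ and ${\rm dist}(\x,\partial S)\geq\delta$ force $\B_\delta(\x)\subseteq S$: if some $\y$ with $\Vert\y-\x\Vert<\delta$ were outside $S$, then, since $\x$ lies in the closed set $S$ and the segment $[\x,\y]$ is connected, this segment would meet $\partial S$ at a point at distance $<\delta$ from $\x$, contradicting the assumption. Hence the closed ball $\B_\delta(\x)$ is contained in $S$ (this is the only place the closedness of $S$ enters).

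Next, using the extremal characterization of Theorem \ref{th:christoffel}, I would write $\Lambda_{\mu_S,d}(\x)=\frac{1}{\lambda(S)}\min\{\int_S P^2\,d\lambda:\,P\in\RR[X]_d,\ P(\x)=1\}$. Since $\B_\delta(\x)\subseteq S$ and $P^2\geq 0$, one has $\int_S P^2\,d\lambda\geq\int_{\B_\delta(\x)}P^2\,d\lambda$ for every feasible $P$, and taking the minimum yields $\Lambda_{\mu_S,d}(\x)\geq\frac{\lambda(\B_\delta(\x))}{\lambda(S)}\,\Lambda_{\mu_{\B_\delta(\x)},d}(\x)$. The affine map $\z\mapsto\delta\z+\x$ pushes the uniform probability measure $\mu_\B$ forward to $\mu_{\B_\delta(\x)}$ and sends $0$ to $\x$, so by Theorem \ref{th:invariance} (invariance under change of origin and change of basis in $\RR^p$) we get $\Lambda_{\mu_{\B_\delta(\x)},d}(\x)=\Lambda_{\mu_\B,d}(0)$. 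Rescaling the measure gives $\M_d(\mu_\B)=\lambda(\B)^{-1}\M_d(\lambda_\B)$, whence $\Lambda_{\mu_\B,d}(0)=\kappa_{\mu_\B,d}(0,0)^{-1}=\bigl(\lambda(\B)\,\kappa_{\lambda_\B,d}(0,0)\bigr)^{-1}$. Combining these identities with $\lambda(\B_\delta(\x))=\delta^p\lambda(\B)$ — so that the factor $\lambda(\B)$ cancels — produces $s(d)\,\Lambda_{\mu_S,d}(\x)\geq\frac{\delta^p}{\lambda(S)}\cdot\frac{s(d)}{\kappa_{\lambda_\B,d}(0,0)}$.

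Finally I would insert the bound of Lemma \ref{lem:lowerBoundInPrelim}, which upon inverting reads $\frac{s(d)}{\kappa_{\lambda_\B,d}(0,0)}\geq\omega_p\,\frac{(d+1)(d+2)}{(d+p+1)(d+p+2)}\cdot\bigl(1+\frac{d+p+3}{d+3}\bigr)^{-1}$, and simplify $1+\frac{d+p+3}{d+3}=\frac{2d+p+6}{d+3}$ to reach exactly the claimed inequality. I do not expect a genuine obstacle: the content is entirely carried by the inclusion $\B_\delta(\x)\subseteq S$, the monotonicity of the variational problem in the underlying measure, and affine invariance; the only thing to be careful about is the bookkeeping of the normalization factors $\lambda(\B)$ that must cancel between $\mu_\B$ and $\lambda_\B$, and between $\lambda(\B_\delta(\x))$ and $\delta^p\lambda(\B)$.
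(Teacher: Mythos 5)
Your proposal is correct and follows essentially the same route as the paper: restrict the variational problem of Theorem \ref{th:christoffel} to the ball $\B_\delta(\x)\subseteq S$ (the paper phrases this as a decomposition of $\mu_S$ into a convex combination, which is the same monotonicity argument), then use affine invariance (Theorem \ref{th:invariance}) to reduce to $\Lambda_{\lambda_\B,d}(0)$ and conclude with Lemma \ref{lem:lowerBoundInPrelim}. Your explicit verification that $\B_\delta(\x)\subseteq S$ and the bookkeeping of the $\lambda(\B)$ normalizations are fine and match the paper's computation.
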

\begin{proof}
	Decompose  the measure $\mu_S$ into the sum,
	\begin{align*}
		\mu_S &= \frac{\lambda(S\setminus \B_{\delta}(\x))}{\lambda(S)} \mu_{S \setminus \B_{\delta}(\x)} + \frac{\lambda(\B_{\delta}(\x))}{\lambda(S)} \mu_{\B_{\delta}(\x)}. 
	\end{align*}
	Hence, by monotonicity of the Christoffel function with respect to addition and closure under multiplication by a positive term (this follows directly from Theorem \ref{th:christoffel}), we have
	\begin{align}
		\label{eq:lowerBoundInTemp1}
		\Lambda_{\mu_S,d}(\x) \geq \frac{\lambda(\B_{\delta}(\x))}{\lambda(S)} \Lambda_{\mu_{\B_{\delta}(\x)},d}(\x).
	\end{align}
	Next, by affine invariance of the Christoffel function (Theorem \ref{th:invariance}), 
	\begin{align}
		\label{eq:lowerBoundInTemp2}
		\Lambda_{\mu_{\B_{\delta}(\x)},d}(\x) = \Lambda_{\mu_{\B},d}(0) = \frac{1}{\lambda(\B)} \Lambda_{\lambda_\B,d}(0)=\frac{1}{\lambda(\B)} \frac{1}{\kappa_{\lambda_\B,d}(0,0)} ,
	\end{align}
	where $\B$ is the unit Euclidean ball in $\RR^p$. The result follows by combining (\ref{eq:lowerBoundInTemp1}), (\ref{eq:lowerBoundInTemp2}), Lemma \ref{lem:lowerBoundInPrelim} and the fact that $\frac{\lambda(\B_{\delta}(\x))}{\lambda(\B)} = \delta^p$.
\end{proof}

\subsubsection{Upper bound on the Christoffel function outside $S$}

We next exhibit an upper bound on the Christoffel function outside of $S$. We first provide a useful quantitative refinement of the ``Needle polynomial'' introduced in \cite{kroo2012christoffel}. 
\begin{lem}	
	\label{lem:needlePoly}
	For any $d \in \NN$, $d > 0$, and any $\delta \in (0,1)$, there exists a $p$-variate polynomial of degree $2d$, $q$, such tha
	\[q(\mathbf{0}) \,=\, 1\,;\quad 	-1\,\leq \,q \,\leq\, 1,\text{ on } \B\,;\quad 
		\vert q\vert\,\leq\, 2^{1-\delta d} \text{ on } \B\setminus \B_{\delta}(\x).\]
	
\end{lem}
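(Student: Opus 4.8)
The plan is to build $q$ from the classical univariate Chebyshev polynomials $T_d$, exploiting the fact that $T_d$ grows extremely fast (exponentially) outside $[-1,1]$ while staying bounded by $1$ on $[-1,1]$. First I would reduce the multivariate statement to a one–dimensional construction: given the fixed point $\x \in \B$, look at the linear functional $\by \mapsto \la \by - \x, \by - \x\ra = \|\by-\x\|^2$, or rather work with the scalar quantity $t(\by) := 1 - \tfrac{\|\by - \x\|^2}{c}$ for a suitable normalizing constant $c$. The point is to choose the affine map so that $t(\x) = 1$ and so that $t(\by)$ stays in $[-1,1]$ for $\by \in \B$, while $t(\by)$ is pushed strictly below $1 - \tfrac{\delta^2}{c}$ once $\|\by - \x\| \ge \delta$. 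Since $\x,\by \in \B$ we have $\|\by - \x\| \le 2$, so $\|\by-\x\|^2 \le 4$; taking $c = 4$ gives $t(\by) \in [-1,1]$ on $\B$, $t(\x)=1$, and $t(\by) \le 1 - \delta^2/4$ on $\B \setminus \B_\delta(\x)$. Note $\|\by-\x\|^2$ is a polynomial of degree $2$ in $\by$, so $t$ is a $p$-variate polynomial of degree $2$.

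Next I would set $q(\by) := T_d\big(t(\by)\big) / T_d(1)$. Wait — $T_d(1) = 1$, so actually one wants the \emph{opposite} normalization: the standard needle trick is to compose a polynomial that is $1$ at the argument value $1$ and tiny on $[-1, 1-\delta^2/4]$. The cleaner route is to use $\tilde T_d(s) := T_d\!\big(\tfrac{2s - (1-\delta^2/4)}{1 + \delta^2/4}\big)$ type rescaling, or more simply to invoke the standard fact (e.g.\ from \cite{kroo2012christoffel}) that for $\eta \in (0,1)$ there is a univariate polynomial $R_{d,\eta}$ of degree $d$ with $R_{d,\eta}(1) = 1$, $|R_{d,\eta}| \le 1$ on $[-1,1]$, and $|R_{d,\eta}| \le 2^{1-\sqrt{\eta}\,d}$ (or a comparable exponential bound) on $[-1, 1-\eta]$; this is exactly obtained by rescaling the interval $[-1, 1-\eta]$ into $[-1,1]$, applying $T_d$, and renormalizing, using $T_d(1 + x) \ge \tfrac12(1+\sqrt{2x})^d \ge \tfrac12 \cdot 2^{\sqrt{2x}\, \cdot \, \mathrm{const}\, d}$ for the growth estimate. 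Then define $q(\by) := R_{d,\delta^2/4}\big(t(\by)\big)$, which has degree $2d$ in $\by$ as required, satisfies $q(\x) = R_{d,\delta^2/4}(1) = 1$, satisfies $-1 \le q \le 1$ on $\B$ since $t(\B) \subset [-1,1]$, and on $\B \setminus \B_\delta(\x)$ we have $t(\by) \le 1 - \delta^2/4$, hence $|q(\by)| \le 2^{1 - (\delta/2)\sqrt{1/1}\cdot d}$; a small amount of care with the constants in the Chebyshev growth bound is needed to land exactly on the stated $2^{1-\delta d}$.

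The routine-but-delicate part — and the main obstacle — is getting the exponent right: one must verify that the rescaling $[-1, 1-\delta^2/4] \to [-1,1]$ followed by the lower bound $T_d(1+x) \ge 2^{cx^{1/2}d}$ produces a bound of the form $2^{1-\delta d}$ rather than, say, $2^{1 - \delta^2 d}$ or $2^{1-\delta d/\sqrt 2}$. This hinges on the precise elementary inequality for how far $1$ sits outside the rescaled interval: the image of $1$ under the affine map sending $[-1,1-\delta^2/4]$ to $[-1,1]$ is $1 + \tfrac{\delta^2/4}{1-\delta^2/8} \ge 1 + \delta^2/4$, and then $T_d(1+u) \ge \tfrac12 (1 + \sqrt{2u})^d \ge \tfrac12 \exp(d\sqrt{2u})$ with $u \ge \delta^2/4$ gives $\sqrt{2u} \ge \delta/\sqrt2$, so renormalizing by $T_d(\text{that point})$ costs a factor $2 e^{-d\delta/\sqrt2} \le 2^{1-\delta d}$ once one checks $e^{-\delta/\sqrt2} \le 2^{-\delta}$, i.e.\ $1/\sqrt2 \ge \ln 2$, which is false ($1/\sqrt 2 \approx 0.707 > 0.693 = \ln 2$ — so it is in fact \emph{true}). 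So the inequality works with a hair to spare, and I would present exactly this chain, citing \cite{kroo2012christoffel} for the needle-polynomial idea and filling in the Chebyshev estimates explicitly. The multivariate-to-univariate reduction and the degree bookkeeping are then immediate, so the whole argument is short modulo these one-dimensional estimates.
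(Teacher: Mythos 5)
Your overall strategy is the same as the paper's: compose the Chebyshev polynomial $T_d$ with a degree-two polynomial of $\|\y-\x\|^2$ peaking at $\y=\x$, and use the exponential growth of $T_d$ outside $[-1,1]$ to crush the values off $\B_{\delta}(\x)$ (the paper does it in one stroke via $q(\y)=T_d\bigl(1+\delta^2-\|\y-\x\|^2\bigr)/T_d(1+\delta^2)$ together with $T_d(1+\delta^2)\ge\tfrac12\bigl(1+\sqrt{2}\,\delta\bigr)^d\ge 2^{\delta d-1}$, the last step by concavity of $\log$ and $1+\sqrt{2}\ge 2$). However, the quantitative part — which you yourself flag as the crux — does not go through as written. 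With your normalization $t(\y)=1-\|\y-\x\|^2/4$ the separation in the univariate variable is only $\eta=\delta^2/4$, so the overshoot is $u\ge\delta^2/4$ and $\sqrt{2u}\ge\delta/\sqrt{2}$; the correct estimate is then $T_d(1+u)\ge\tfrac12\bigl(1+\sqrt{2u}\bigr)^d\ge\tfrac12\exp\bigl(d\,\delta\,\log(1+1/\sqrt{2})\bigr)$ by concavity of $\log(1+\cdot)$, and \emph{not} $\tfrac12\exp\bigl(d\sqrt{2u}\bigr)$: the inequality $(1+s)^d\ge e^{sd}$ you invoke is backwards, since $1+s<e^s$. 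Because $\log(1+1/\sqrt{2})\approx 0.535<\log 2\approx 0.693$, your chain only yields a bound of the form $2^{1-c\,\delta d}$ with $c\approx 0.77$ (and the cruder route through $2^{1-\sqrt{\eta}\,d}$ gives only $2^{1-\delta d/2}$), so the stated constant $2^{1-\delta d}$ is not reached; the final check ``$1/\sqrt{2}\ge\ln 2$'' compares the wrong quantities.

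The gap is a loss of a factor in the scaling, and the repair is exactly the paper's choice of normalization. Either take $t(\y)=1-\|\y-\x\|^2/2$, which still maps $\B$ into $[-1,1]$ (since $\|\y-\x\|\le 2$) but gives separation $\eta=\delta^2/2$, hence $\sqrt{2\eta}=\delta$ and $T_d(1+\eta)\ge\tfrac12(1+\delta)^d\ge\tfrac12\exp(d\,\delta\log 2)=2^{\delta d-1}$ by concavity, landing exactly on $2^{1-\delta d}$; or use the paper's $r(t)=T_d(1+\delta^2-t^2)/T_d(1+\delta^2)$ with $q(\y)=r(\|\y-\x\|)$, where the overshoot is $\delta^2$ and $1+\delta^2+\sqrt{(1+\delta^2)^2-1}\ge 1+\sqrt{2}\,\delta$ gives the same $2^{\delta d-1}$. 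One point in favor of your reduction: the paper's proof controls $|r|$ only for $\|\y-\x\|\le 1$ (for $\|\y-\x\|>1$ the argument of $T_d$ drops below $-1$), which suffices for its application where $\x$ is the origin and $S\subset\B$; your quadratic rescaling, once the factor $4$ is replaced by $2$, handles the full range $\|\y-\x\|\le 2$ and hence proves the lemma for arbitrary $\x\in\B$ as stated.
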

\begin{proof}
	Let $r$ be the univariate polynomial of degree $2d$, defined by
	\begin{align*}
		r\colon t \to \frac{T_d(1+\delta^2 - t^2)}{T_d(1+\delta^2)},
	\end{align*}
	where $T_d$ is the Chebyshev polynomial of the first kind. We have 
	\begin{align}
		\label{eq:needlePolyTemp1}
		r(0) = 1. 
	\end{align}
	Furthermore, for $t \in [-1,1]$, we have $0 \leq 1+\delta^2 - t^2 \leq 1+\delta^2$. $T_d$ has absolute value less than $1$ on $[-1,1]$ and is inceasing on $[1, \infty)$ with $T_d(1) = 1$, so for $t \in [-1,1]$,
	\begin{align}
		\label{eq:needlePolyTemp2}
		-1 \leq r(t) \leq 1.
	\end{align}
	For $|t| \in [\delta, 1]$, we have $\delta^2 \leq 1+\delta^2-t^2 \leq 1$, so
	\begin{align}
		\label{eq:needlePolyTemp3}
		|r(t)| \leq \frac{1}{T_d(1+\delta^2)}.
	\end{align}
	Let us bound the last quantity. Recall that for $t \geq 1$, we have the following explicit expression 
	\begin{align*}
		T_d(t) = \frac{1}{2}\left( \left( t + \sqrt{t^2-1} \right)^d +  \left( t + \sqrt{t^2-1} \right)^{-d}\right).
	\end{align*}
	We have $1 + \delta^2 +\sqrt{(1+\delta^2)^2 - 1} \geq 1 + \sqrt{2} \delta$, which leads to
	\begin{align}
		\label{eq:needlePolyTemp4}
		T_d(1+\delta^2) &\geq \frac{1}{2}\left( 1 + \sqrt{2} \delta \right)^d\\
		&= \frac{1}{2} \exp\left( \log\left( 1+\sqrt{2}\delta \right) d \right)\nonumber\\
		&\geq\frac{1}{2} \exp\left( \log(1+\sqrt{2}) \delta d \right)\nonumber\\
		&\geq2^{\delta d - 1}, \nonumber
	\end{align}
	where we have used concavity of the $\log$ and the fact that $1+\sqrt{2} \geq 2$. It follows by combining (\ref{eq:needlePolyTemp1}), (\ref{eq:needlePolyTemp2}), (\ref{eq:needlePolyTemp3}) and (\ref{eq:needlePolyTemp4}), that $q \colon y \to r(\|y - x\|_2)$ satisfies the claimed properties.
\end{proof}

We recall the following well known bound for the factorial taken from \cite{robbins1955remark}.
\begin{lem}[\cite{robbins1955remark}]
	\label{lem:robbins}
	For any $n \in \NN$, we have
	\begin{align*}
		\exp\left( \frac{1}{12n + 1} \right) \leq \frac{n!}{\sqrt{2\pi n} n^n \exp(-n)} \leq \exp\left( \frac{1}{12n} \right).
	\end{align*}
\end{lem}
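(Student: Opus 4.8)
This is the classical second-order Stirling estimate of \cite{robbins1955remark}, so the quickest route is simply to cite that reference; for completeness I outline the standard self-contained argument. Set
\[
a_n \;:=\; \log(n!) - \left( n + \tfrac12 \right)\log n + n ,
\]
so that the ratio to be bounded equals $\exp(a_n - \tfrac12\log(2\pi))$, and the task reduces to showing $\tfrac{1}{12n+1} < a_n - \tfrac12\log(2\pi) < \tfrac{1}{12n}$ for every $n \ge 1$ (the only case in which the statement is meaningful). The plan has three steps: (i) control the consecutive differences $a_n - a_{n+1}$; (ii) deduce that $(a_n)$ decreases to a limit and identify that limit as $\tfrac12\log(2\pi)$; (iii) recover the two-sided bound by telescoping.

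For step (i) I would write $a_n - a_{n+1} = (n+\tfrac12)\log\frac{n+1}{n} - 1$ and expand the logarithm via the identity $\log\frac{1+t}{1-t} = 2\sum_{j\ge0}\frac{t^{2j+1}}{2j+1}$ with $t = \frac{1}{2n+1}$. This produces the clean series
\[
a_n - a_{n+1} \;=\; \sum_{j\ge1}\frac{1}{(2j+1)(2n+1)^{2j}} \;>\; 0 .
\]
Bounding it from above by the geometric series coming from $\frac{1}{2j+1}\le\frac13$ gives $a_n - a_{n+1} < \frac{1}{12n(n+1)} = \frac{1}{12n} - \frac{1}{12(n+1)}$; keeping only the $j=1$ term and using the elementary inequality $24n > 23$ gives $a_n - a_{n+1} > \frac{1}{12n+1} - \frac{1}{12(n+1)+1}$. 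Both hold for $n \ge 1$, so $(a_n)$ is decreasing and bounded below, hence convergent to some $a$.

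For step (ii), from $a_n \to a$ one has $n! = \exp(a)\,\sqrt{n}\, n^n \exp(-n)(1+o(1))$; substituting this asymptotic into the Wallis product in the form $\frac{\pi}{2} = \lim_{n\to\infty}\frac{2^{4n}(n!)^4}{((2n)!)^2(2n+1)}$ makes almost everything cancel and forces $\exp(2a)/4 = \pi/2$, i.e. $a = \tfrac12\log(2\pi)$. Finally, step (iii): since $a_n - a = \sum_{k\ge n}(a_k - a_{k+1})$, summing the two telescoping estimates from step (i) gives exactly $\frac{1}{12n+1} < a_n - a < \frac{1}{12n}$, and exponentiating yields the lemma.

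The only places demanding care are the bookkeeping in step (i) — in particular verifying that the lower estimate telescopes, which rests on $24n > 23$ — and recalling the precise normalization of the Wallis product used in step (ii) to pin down the constant $\sqrt{2\pi}$. Everything else is mechanical, and I expect the constant identification in step (ii) to be the only conceptually non-routine ingredient.
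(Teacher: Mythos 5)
Your proposal is correct and in line with the paper, which states this bound purely as a citation of \cite{robbins1955remark} and gives no proof of its own; your sketch is precisely Robbins' classical argument (telescoping the bounds on $a_n-a_{n+1}$ and fixing the constant $\sqrt{2\pi}$ via Wallis), and the bookkeeping you flag, including the $24n>23$ step, checks out for $n\geq 1$.
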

We deduce the following Lemma.
\begin{lem}
	\label{lem:Nd}
	For any $d \in \NN$, $d > 0$, we have 
	\begin{align*}
		{p+d \choose d} &\leq d^p \left( \frac{e}{p} \right)^p \exp\left( \frac{p^2}{d} \right) 
	\end{align*}
\end{lem}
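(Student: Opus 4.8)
The plan is to bound the binomial coefficient by a product of $p$ linear factors, to estimate $p!$ from below via the Robbins inequality (Lemma \ref{lem:robbins}), and to conclude with the elementary inequality $1+t\le e^{t}$. All steps are short and routine; there is essentially no obstacle beyond bookkeeping.

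First I would rewrite
\[{p+d \choose d}\;=\;{p+d \choose p}\;=\;\frac{(d+1)(d+2)\cdots(d+p)}{p!}\;\le\;\frac{(d+p)^{p}}{p!},\]
since each of the $p$ factors in the numerator is at most $d+p$. Next I would lower bound $p!$: applying Lemma \ref{lem:robbins} with $n=p$ (recall $p\ge 1$) gives $p!\ge \sqrt{2\pi p}\,p^{p}e^{-p}\exp\!\left(\tfrac{1}{12p+1}\right)\ge p^{p}e^{-p}=\left(\tfrac{p}{e}\right)^{p}$, where we used $\sqrt{2\pi p}\ge 1$ and that the exponential correction factor exceeds $1$. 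Substituting this into the previous display yields
\[{p+d \choose d}\;\le\;(d+p)^{p}\left(\frac{e}{p}\right)^{p}.\]

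Finally I would factor $(d+p)^{p}=d^{p}\bigl(1+\tfrac{p}{d}\bigr)^{p}$ and invoke $1+t\le e^{t}$ with $t=p/d$ to get $\bigl(1+\tfrac{p}{d}\bigr)^{p}\le e^{p^{2}/d}$, which combines with the last inequality to give exactly the claimed bound. The only points requiring a line of care are verifying $\sqrt{2\pi p}\ge 1$ for $p\ge 1$ and noting that the Robbins correction term may be dropped in the needed direction; otherwise the argument is a three-line chain of standard estimates.
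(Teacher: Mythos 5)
Your proof is correct, and it takes a somewhat different route from the paper's. The paper applies Robbins' bound (Lemma \ref{lem:robbins}) to all three factorials in $\frac{(p+d)!}{p!\,d!}$, which produces the extra factors $\frac{\exp(1/24)}{\sqrt{2\pi}}$ and $\sqrt{\tfrac{p+d}{pd}}\le\sqrt 2$ that must then be checked to multiply to something at most $1$, before finishing with $\bigl(1+\tfrac{p}{d}\bigr)^{p+d}\le \exp\bigl(\tfrac{p^2}{d}+p\bigr)$. You instead telescope the ratio $\frac{(p+d)!}{d!}=(d+1)\cdots(d+p)\le(d+p)^p$, so that Stirling-type information is needed only for the single lower bound $p!\ge (p/e)^p$ (which you correctly extract from Robbins by discarding the correction factors; it also follows even more elementarily from $e^p\ge p^p/p!$), and you conclude with the same $1+t\le e^t$ step applied to $(1+p/d)^p$. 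What your version buys is less bookkeeping: no constant to verify, no square-root factor to control, and no need for the upper half of Robbins' inequality at all; what the paper's version reflects is simply a uniform use of the lemma it has just stated. Both arrive at exactly the claimed bound $d^p(e/p)^p\exp(p^2/d)$, and your intermediate estimates are all valid for $p\ge 1$, $d\ge 1$.
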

\begin{proof}
	This follows from a direct computation using Lemma \ref{lem:robbins}.
	\begin{align*}
		{p+d \choose d} &= \frac{(p+d)!}{p!d!}\\
		&\leq \frac{\exp\left(\frac{1}{24} \right)}{\sqrt{2\pi}} \sqrt{\frac{p+d}{pd}} \frac{(p+d)^{p+d}}{p^pd^d} \\  
		&\leq \frac{\exp\left(\frac{1}{24} \right)}{\sqrt{2\pi}} \sqrt{2} \frac{d^p}{p^p} \left( 1 + \frac{p}{d} \right)^{p+d} \\  
		&\leq  \frac{d^p}{p^p} \exp\left( \frac{p^2}{d} + p \right)  
	\end{align*}
	which proves the result.
\end{proof}
Combining the last two Lemma, we get the following bound on the Christoffel function.
\begin{lem}
	\label{lem:upperBoundOut}
	Let $\x \not \in S$ and $\delta$ be such that ${\rm dist}(\x, S) \geq \delta$. Then, for any $d \in \NN$, $d> 0$, we have
	\begin{align*}
		s(d)\Lambda_{\mu_S, d}(\x) \leq 2^{3 - \frac{\delta d}{\delta + {\rm diam}(S)}} d^p \left( \frac{e}{p} \right)^p \exp\left( \frac{p^2}{d} \right).
	\end{align*}
\end{lem}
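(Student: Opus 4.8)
The plan is to bound $\Lambda_{\mu_S,d}(\x) = \kappa_{\mu_S,d}(\x,\x)^{-1}$ from above, which is equivalent to bounding $\kappa_{\mu_S,d}(\x,\x)$ from below; by the extremal characterization in Theorem \ref{th:christoffel}, $\Lambda_{\mu_S,d}(\x) = \min\{\int P^2 d\mu_S : P(\x)=1\}$, so it suffices to exhibit a \emph{single} feasible polynomial $P$ of degree $\leq d$ with $P(\x)=1$ and $\int P^2 \, d\mu_S$ small. The natural candidate is (the square of) a rescaled version of the needle polynomial from Lemma \ref{lem:needlePoly}: that lemma produces a polynomial that is $1$ at a prescribed point, bounded by $1$ in absolute value on a ball, and exponentially small off a $\delta$-neighborhood of that point. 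Since $\x \notin S$ and ${\rm dist}(\x,S)\geq\delta$, the set $S$ lies entirely outside $\B_\delta(\x)$, so on $S$ the needle polynomial is uniformly tiny, which is exactly what we want for making $\int_S P^2 \, d\mu_S$ small.

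First I would reduce to the unit-ball normalization. By affine invariance (Theorem \ref{th:invariance}) I can apply an affine map sending the ball $\B_{\delta + {\rm diam}(S)}(\x)$ to the unit ball $\B$; this ball contains both $\x$ (mapped to a point, say still called $\x$, inside $\B$) and all of $S$, and the ratio $\delta / (\delta + {\rm diam}(S))$ is the rescaled radius playing the role of ``$\delta$'' in Lemma \ref{lem:needlePoly}. Applying Lemma \ref{lem:needlePoly} with this rescaled radius $\delta' := \delta/(\delta+{\rm diam}(S))$ gives a degree-$2d'$ polynomial $q$ with $q(\x)=1$ and $|q| \leq 2^{1-\delta' d'}$ on $\B \setminus \B_{\delta'}(\x) \supseteq \text{(image of }S)$. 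Here I must be careful with the degree bookkeeping: Lemma \ref{lem:needlePoly} yields degree $2d$ for an input parameter $d$, so to land in $\RR[X]_d$ I apply it with parameter $\lfloor d/2 \rfloor$; this is where the various constants ($2^3$ versus $2^{1-\delta d}$, and the replacement of $d$ by $d/2$ in the exponent) get absorbed, and I would track them just carefully enough to match the stated bound, using $\lfloor d/2\rfloor \geq d/2 - 1$ or similar.

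Then I would estimate $\int P^2 \, d\mu_S$ with $P = q$ (after the affine change of variables, using that $\Lambda_{\mu_S,d}$ is unchanged). Since $\mu_S$ is a probability measure supported on (the image of) $S$, and $|q| \leq 2^{1-\delta' d'}$ there, we get $\int q^2 \, d\mu_S \leq 2^{2-2\delta' d'} = 4 \cdot 2^{-2\delta' d'}$, hence $s(d)\Lambda_{\mu_S,d}(\x) \leq s(d) \cdot 4 \cdot 2^{-2\delta' d'}$. Finally I would bound $s(d) = {p+d \choose d}$ using Lemma \ref{lem:Nd} by $d^p (e/p)^p \exp(p^2/d)$, and combine exponents: $4 \cdot 2^{-2\delta' d'}$ with $d' \approx d/2$ gives roughly $4 \cdot 2^{-\delta' d} = 2^{2 - \delta' d}$, and collecting the leftover factor of $2$ from the degree-halving yields the $2^{3 - \delta d/(\delta + {\rm diam}(S))}$ prefactor in the claim. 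The main obstacle is not conceptual but bookkeeping: making sure the degree constraint ($\deg P \leq d$, not $2d$) is respected while keeping the constants exactly as stated, and correctly identifying the rescaled needle width as $\delta/(\delta+{\rm diam}(S))$ after the affine normalization — one must check that $\B_{\delta + {\rm diam}(S)}(\x)$ genuinely contains $S$, which follows since any $\y \in S$ satisfies $\|\y - \x\| \leq {\rm dist}(\x,S) + {\rm diam}(S) \leq$ ... actually $\|\y-\x\|$ could exceed that, so more carefully one picks the center of the enclosing ball appropriately or uses ${\rm dist}(\x,S) + {\rm diam}(S)$ as a valid radius bound from a point of $S$ nearest to $\x$; I would nail down this elementary geometric inclusion first.
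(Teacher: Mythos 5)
Your proposal is correct and follows essentially the same route as the paper's proof: affine normalization via Theorem \ref{th:invariance} so that $S\subset\B$ and the origin is at distance at least $\delta'=\delta/(\delta+{\rm diam}(S))$ from $S$, then plugging the needle polynomial of Lemma \ref{lem:needlePoly} (of degree $2\lfloor d/2\rfloor\leq d$) into the extremal characterization of Theorem \ref{th:christoffel}, using that $\mu_S$ is a probability measure to get $\int q^2\,d\mu_S\leq 2^{2-2\delta' d'}\leq 2^{3-\delta' d}$, and finally bounding $s(d)$ by Lemma \ref{lem:Nd}. The geometric point you flag at the end resolves just as you suggest: $\|\y-\x\|\leq{\rm dist}(\x,S)+{\rm diam}(S)$ for all $\y\in S$ (triangle inequality through a nearest point), so one scales by ${\rm dist}(\x,S)+{\rm diam}(S)$ and uses that $t\mapsto t/(t+{\rm diam}(S))$ is increasing, whence the rescaled needle width is at least $\delta'$.
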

\begin{proof}
				We may translate the origin of $\RR^p$ at $\x$ and scale the coordinates by $\delta + {\rm diam}(S)$, this results in $\x= 0$ and distance from $\x$ to $S$ is at most $\delta' = \frac{\delta}{\delta + {\rm diam}(S)} \leq 1$. Furthermore, $S$ is contained in the unit Euclidean ball $\B$. Using invariance of the Christoffel function with respect to change of origin and change of basis in $\RR^p$, (Theorem \ref{th:invariance}), this affine transformation does not change the value of the Christoffel function. Now the polynomial described in Lemma \ref{lem:needlePoly} provides an upper bound on the Christoffel function. Indeed for any $d' \in \NN$, we have
	\begin{align}
		\label{eq:upperBoundOutTemp2}
		\Lambda_{\mu_{S},2d' + 1}(0) \leq \Lambda_{\mu_{S},2d'}(0) \leq 2^{2 - 2\delta' d'} \leq 2^{3 - \delta'(2d' + 1)}, 
	\end{align}
	where we have used $\delta' \leq 1$ to obtain the last inequality. Combining Lemma \ref{lem:Nd} and (\ref{eq:upperBoundOutTemp2}), we obtain for any $d' \in \NN$
	\begin{align}
		\label{eq:upperBoundOutTemp3}
			s(2d')\,\Lambda_{\mu_{S},2d'}(0) &\leq 2^{3 - 2\delta' d'} (2d')^p \left( \frac{e}{p} \right)^p \exp\left( \frac{p^2}{2d'} \right), \\
			s(2d'+1)\,\Lambda_{\mu_{S},2d' + 1}(0) &\leq 2^{3 - \delta'(2d' + 1)}(2d' + 1)^p \left( \frac{e}{p} \right)^p \exp\left( \frac{p^2}{2d'+1} \right)\nonumber. 
	\end{align}
	Since in (\ref{eq:upperBoundOutTemp3}) $d'\in\NN$ was arbitrary, we obtain in particular 
	\begin{align}
		\label{eq:upperBoundOutTemp4}
			s(d)\,\Lambda_{\mu_{S},d}(0) &\leq 2^{3 - \delta' d} d^p \left( \frac{e}{p} \right)^p \exp\left( \frac{p^2}{d} \right). 
	\end{align}
	The result follows by from (\ref{eq:upperBoundOutTemp4}) by setting $\delta'= \frac{\delta}{\delta + {\rm diam}(S)}$.
\end{proof}

\subsubsection{Proof of Theorem \ref{th:determinist}}
\begin{proof}
				Let us first prove that $\lim_{k \to \infty}d_H(S,S_k) = 0$. We take care of both expressions in the definition of $d_H$ separately. Fix an arbitrary $k \in \NN$, from Assumption \ref{ass:dk} and Lemma \ref{lem:upperBoundOut}, for any $\x \in \RR^p$ such that ${\rm dist}(\x,S)> \delta_k$,
	\begin{align*}
		s(d_k) \Lambda_{\mu_S,d_k}(\x) &\leq 2^{3 - \frac{{\rm dist}(\x,S) d_k}{{\rm dist}(\x,S) + {\rm diam}(S)}} d_k^p \left( \frac{e}{p} \right)^p \exp\left( \frac{p^2}{d_k} \right)\\
		&< \alpha_k.
	\end{align*}
	From this we deduce that $\R^p\setminus S_k \supseteq \left\{ \x \in \RR^p:{\rm dist}(\x,S) > \delta_k \right\}$ and thus $S_k \subseteq \left\{ \x \in \RR^p:{\rm dist}(\x,S) \leq \delta_k \right\}$. Since $k$ was arbitrary, for any $k \in \NN$,
	\begin{align}
		\label{eq:deterministTemp1}
		\sup_{\x \in S_k}{\rm dist}(\x, S) \leq \delta_k.
	\end{align}
	Inequality (\ref{eq:deterministTemp1}) allows to take care of one term in the expression of $d_H$. Let us now consider the second term. We would like to show that 
	\begin{align}
		\label{eq:deterministTemp2}
		\sup_{\x \in S}{\rm dist}(\x, S_k) \to 0\quad\mbox{as $k\to \infty$.}
	\end{align}
	Note that the supremum is attained in (\ref{eq:deterministTemp2}). We will prove this by contradiction, for the rest of the proof, $M$ denotes a fixed positive number which value can change between expressions. Suppose that (\ref{eq:deterministTemp2}) is false. This means that for each $k \in \NN$ (up to a subsequence), we can find $\x_k \in S$ which satisfies
	\begin{align}
		\label{eq:deterministTemp3}
		{\rm dist}(\x_k, S_k) \geq M
	\end{align}
	Since $\x_k \in S$ and $S$ is compact, the sequence $(\x_k)_{k\in\NN}$ has an accumulation point $\bar{\x} \in S$, i.e., (up to a subsequence) $\x_k \to \bar{\x}$ as $k\to\infty$. Since ${\rm dist}(\cdot, S_k)$ is a Lipschitz function, combining with (\ref{eq:deterministTemp3}), for every $k \in \NN$ (up to a subsequence),
	\begin{align}
		\label{eq:deterministTemp4}
		{\rm dist}(\bar{\x}, S_k) \geq M.
	\end{align}
	We next show that (\ref{eq:deterministTemp4}) contradicts the assumption $S = {\rm cl}({\rm int}(S))$. From now on, we discard terms not in the subsequence and assume that (\ref{eq:deterministTemp4}) holds for all $k \in \NN$. Combining Lemma \ref{lem:lowerBoundIn} and Assumption \ref{ass:dk}, for every $k \in \NN$
	\begin{align}
		\label{eq:deterministTemp5}
		S_k	\supseteq \left\{ \x \in S: {\rm dist}(\x, \partial S) \geq \delta_k \right\}.
	\end{align}
	Since $S = {\rm cl}({\rm int}(S))$ and $\bar{\x} \in S$, consider a sequence $\{\y_l\}_{l \in \NN} \subset {\rm int}(S)$ such that $\y_l \to \bar{\x}$ as $l\to\infty$. Since $\y_l \in {\rm int}(S)$, we have ${\rm dist}(\y_l, \partial S) > 0$ for all $l$. Up to a rearrangment of the terms, we may assume that ${\rm dist}(\y_l, \partial S)$ is decreasing and ${\rm dist}(\y_0, \partial S) \geq \delta_0$. For all $l$, denote by $k_l$ the smallest integer such that ${\rm dist}(\y_l, \partial S) \geq \delta_{k_l}$. We must have $k_l \to \infty$ and we can discard terms so that $k_l$ is a valid subsequence. We have constructed a subsequence $k_l$ such that for every $l \in \NN$, $\y_l \in S_{k_l}$ and $\y_l \to \bar{\x}$. This is in contradiction with (\ref{eq:deterministTemp4}) and hence (\ref{eq:deterministTemp2}) must be true. Combining (\ref{eq:deterministTemp1}) and (\ref{eq:deterministTemp2}) we have that $\lim_{k \to \infty} d_H(S,S_k) = 0$. 

	Let us now prove that $\lim_{k \to \infty}d_H(\partial S,\partial S_k) = 0$, we begin with the term $\sup_{\x \in \partial S_k} {\rm dist}(\x, \partial S)$. Fix an arbitrary $k \in \NN$ and $\bar{\x} \in \partial S_k$. We will distinguish the cases $\bar{\x} \in S$ and $\bar{\x} \not\in S$. Assume first that $\bar{\x} \not \in S$. We deduce from (\ref{eq:deterministTemp1}), that
	\begin{align}
		\label{eq:deterministTemp6}
		{\rm dist}(\bar{\x}, \partial S) = {\rm dist}(\bar{\x}, S) \leq \delta_k.
	\end{align}
	Assume now that $\bar{\x} \in S$. If $\bar{\x} \in \partial S$, we have ${\rm dist}(\bar{\x}, \partial S) = 0$. Assume that $\bar{\x} \in {\rm int}(S)$. From (\ref{eq:deterministTemp5}), we have that $S \setminus S_k \subseteq \left\{ \x \in S: {\rm dist}(\x, \partial S) < \delta_k   \right\}$ and hence ${\rm cl}(S \setminus S_k) \subseteq \left\{ \x \in S: {\rm dist}(\x, \partial S) \leq \delta_k   \right\}$.  Since $\bar{\x} \in \partial S_k \cap {\rm int}(S)$, we have $\bar{\x} \in {\rm cl}(S \setminus S_k)$ and hence ${\rm dist}(\bar{\x}, \partial S) \leq \delta_k$. Combining the two cases $\bar{x} \in S$ and $\bar{x} \not\in S$, we have in any case that ${\rm dist}(\bar{\x}, \partial S) \leq \delta_k$ and hence
	\begin{align}
		\label{eq:deterministTemp7}
		\sup_{\x \in \partial S_k}{\rm dist}(\x, \partial S) \leq \delta_k.
	\end{align}
	Let us now prove that 
	\begin{align}
		\label{eq:deterministTemp8}
		\sup_{\x \in \partial S}{\rm dist}(\x, \partial S_k) \to 0\quad\mbox{as $k\to \infty$.}
	\end{align}
	First since $S$ is closed by asumption, the supremum is attained for each $k \in \NN$. Assume that (\ref{eq:deterministTemp8}) does not hold, this means there exists a constant $M >0$, such that we can find $\x_k \in \partial S$, $k \in \NN$ with ${\rm dist}(\x_k, \partial S_k) \geq M$. If $\x_k \not\in S_k$ infinitely often, then, we would have up to a subsequence $\x_k \in S$ and ${\rm dist}(\x_k, S_k) \geq M$. This is exactly (\ref{eq:deterministTemp3}) and we alredy proved that it cannot hold true. Hence, $\x_k \not\in S_k$ only finitely many times and we may assume by discarding finitely many terms that $\x_k \in S_k$ for all $k \in \NN$. Let $\bar{\x} \in \partial S$ be an accumulation point of $(\x_k)_{k \in \NN}$. Since $\bar{\x} \in \partial S$, there exists $\bar{\y} \not \in S$ such that $0 < {\rm dist}(\bar{\y},S) \leq \lim\inf_{k \to \infty} {\rm dist}(\x_k, \partial S_k) / 2$. Since $\x_k \in S_k$ for all $k$ sufficiently large, we have $\bar{\y} \in S_k$ for all $k$ sufficiently large but the fact that $0 < {\rm dist}(\bar{\y},S)$ contradicts (\ref{eq:deterministTemp1}). Hence (\ref{eq:deterministTemp8}) must hold true and the proof is complete.
\end{proof}

\begin{rem}[Refinements]
	The proof of Theorem \ref{th:determinist} is based on the following fact
	\begin{align*}
		\left\{ \x \in \RR^p :\: {\rm dist}(\x, \bar{S}) \geq \delta_k \right\} \subseteq S_k \subseteq \left\{ \x \in \RR^p:\:{\rm dist}(\x,S) \leq \delta_k \right\}.
	\end{align*}
	Depending on the regularity of the boundary $\partial S$ of $S$, it should be possible to get sharper bounds on the distance as a function of $\delta_k$. This should involve the dependency on $\delta$ of the function
	\begin{align*}
		\delta \to d_H\left( \left\{ \x \in \RR^p :\: {\rm dist}(\x, \bar{S}) \geq \delta \right\}, \partial S\right).
	\end{align*}
	For example, if the boundary $\partial S$ has bounded curvature, this function is equal to $\delta$ for sufficiently small $\delta$. Another example, if $S \subset \RR^2$ is the interior region of a non self intersecting continuous polygonal loop, then the function is of the order of $\frac{\delta}{\sin\left( \frac{\theta}{2} \right)}$, where $\theta$ is the smallest angle between two consecutive segments of the loop.
\end{rem}
\subsubsection{Proof of Theorem \ref{th:determinist2}}
\begin{proof}
	Lemma \ref{lem:lowerBoundIn} holds with $\mu$ in place of $\mu_S$ and $w_-$ in place of $\frac{1}{\lambda(S)}$. Indeed, we have 
	\begin{align*}
		\Lambda_{\mu,d} \geq w_- \lambda(\B_{\delta}(\x)) \Lambda_{\mu_{\B_\delta(\x)}},
	\end{align*}
	and the rest of the proof remains the same with different constants. Similarly, Lemma \ref{lem:upperBoundOut} holds with $\mu$ in place of $\mu_S$, indeed, the proof only uses the fact that $\mu_S$ is a probability measure supported on $S$ which is also true for $\mu$. The proof then is identical to that of Theorem \ref{th:determinist} by reflecting the corresponding change in the constants.
\end{proof}

\subsection{Proof of Theorem \ref{th:discretization}}

\subsubsection{A preliminary Lemma}
\begin{lem}
	\label{lem:prelimDiscrete}
	Let $\mu$ be a probability measure supported on a compact set $S$. Then for every $d \in \NN$, $d > 0$,  and every $\x \in \RR^p$,
	\begin{align*}
		\Lambda_{\mu,d}(\x) \leq \left(\frac{{\rm diam}({\rm conv}(S))}{{\rm dist}(\x,{\rm conv}(S)) +{\rm diam}({\rm conv}(S))}  \right)^2.
	\end{align*}
\end{lem}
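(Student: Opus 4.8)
The plan is to use the variational characterisation of the Christoffel function, Theorem \ref{th:christoffel}: since $\Lambda_{\mu,d}(\x)$ is the minimum of $\int P^2\,d\mu$ over all $P\in\RR[X]_d$ normalised by $P(\x)=1$, every explicit feasible $P$ yields an upper bound on $\Lambda_{\mu,d}(\x)$. Because $d\ge 1$, it suffices to produce a single well-chosen \emph{affine} test function. Write $K:=\mathrm{conv}(S)$, which is compact since $S$ is, and put $D:=\mathrm{diam}(K)$ and $r:=\mathrm{dist}(\x,K)$.

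If $\x\in K$, then $r=0$ and the right-hand side equals $1$; taking $P\equiv 1$ gives $\Lambda_{\mu,d}(\x)\le\int 1\,d\mu=1$, so it remains to treat $\x\notin K$, i.e.\ $r>0$. Let $\x_0$ be the Euclidean projection of $\x$ onto the closed convex set $K$ and set $\u:=\x_0-\x$, so that $\|\u\|=r$. I would consider the affine polynomial
\[
P(\by)\;:=\;1-\frac{\langle \by-\x,\,\u\rangle}{r\,(D+r)},\qquad \by\in\RR^p,
\]
which has degree $\le 1\le d$ and satisfies $P(\x)=1$. The key step is the two-sided bound $0\le P\le \tfrac{D}{D+r}$ on $K$. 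To get it, set $t(\by):=\langle \by-\x,\u\rangle=\langle \by-\x_0,\u\rangle+r^2$; the projection inequality $\langle \by-\x_0,\,\x_0-\x\rangle\ge 0$ valid for all $\by\in K$ gives $t(\by)\ge r^2$, while Cauchy--Schwarz together with $\|\by-\x_0\|\le D$ gives $t(\by)\le rD+r^2=r(D+r)$. Substituting $r^2\le t(\by)\le r(D+r)$ into the definition of $P$ yields $0\le P(\by)\le \tfrac{D}{D+r}$ for every $\by\in K$.

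Since $S\subseteq K$ and $\mu$ is a probability measure supported on $S$, this gives $\int P^2\,d\mu\le\big(\tfrac{D}{D+r}\big)^2\mu(S)=\big(\tfrac{D}{D+r}\big)^2$, and Theorem \ref{th:christoffel} then yields $\Lambda_{\mu,d}(\x)\le\big(\tfrac{D}{D+r}\big)^2$, which is precisely the claimed inequality once one unwinds $D=\mathrm{diam}(\mathrm{conv}(S))$ and $r=\mathrm{dist}(\x,\mathrm{conv}(S))$.

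I do not expect a genuine obstacle here: the only real decision is to test against a degree-one polynomial (rather than something of higher degree) and to measure distances to $\mathrm{conv}(S)$ rather than to $S$ itself, so that the projection/separating-hyperplane inequality is available; everything else is elementary algebra. A minor point to dispatch is the degenerate case $D=0$ (that is, $S$ a single point), which should be read directly off the variational formula \eqref{eq:christoffel}.
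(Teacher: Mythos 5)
Your proof is correct and is essentially the paper's own argument: the same affine test function $1-\langle \x-\cdot,\x-\x_0\rangle/\bigl(r(D+r)\bigr)$ built from the projection onto ${\rm conv}(S)$, bounded between $0$ and $D/(D+r)$ on $S$ via the projection inequality and Cauchy--Schwarz, then plugged into the variational formula of Theorem \ref{th:christoffel}. Your separate treatment of the case $\x\in{\rm conv}(S)$ (where the paper's normalized direction $\frac{\x-\y}{\|\x-\y\|}$ is undefined) is a small but welcome extra care.
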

\begin{proof}
	Set $\y = {\rm proj}(\x,{\rm conv}(S))$, that is $\|\y - \x\|\,=\, {\rm dist}(\x, {\rm conv}(S))$ and:
	\begin{equation}\label{eq:prelimDiscrete1}
		\y\,= \displaystyle{\arg\min}_{\z \in {\rm conv}(S)} \{\,\left\langle \z, \y - \x\right\rangle\}.\end{equation}
	Consider the affine function
	\begin{align}	
		\label{eq:prelimDiscrete2}
		\z\mapsto f_\x(\z) \,:=\, \frac{\left\langle \x - \z, \frac{\x - \y}{\|\x - \y\|}\right\rangle}{\|\x - \y\| + {\rm diam}({\rm conv}(S))}.
	\end{align}
	For any $\z \in S$, we have
	\begin{align}	
		\label{eq:prelimDiscrete3}
		f_\x(\z) \leq \frac{\|\x - \z\|}{\|\x-\y\| + {\rm diam}({\rm conv}(S))} \leq \frac{\|\x - \y\| + \|\y - \z\|}{\|\x-\y\| + {\rm diam}({\rm conv}(S))} \leq 1,
	\end{align}
	where we have used Cauchy-Schwartz and triangular inequalities. Furthermore, we have for any $\z \in S$,
	\begin{align}
		\label{eq:prelimDiscrete4}
		f_\x(\z) \geq \min_{\z\in {\rm conv}(S)} f_\x(\z) = \frac{\|\x - \y\|}{\|\x - \y\| + {\rm diam}({\rm conv}(S))},
	\end{align}
	where we have used equation (\ref{eq:prelimDiscrete1}). Consider the affine function $q_\x \colon \z \to 1 - f_\x(\z)$. We have
	\begin{align}
		\label{eq:prelimDiscrete5}
		&q_\x(\x)=1\\
		&0\leq q_\x(\z) \leq \frac{{\rm diam}({\rm conv}(S))}{\|\x - \y\| + {\rm diam}({\rm conv}(S))} \nonumber, \text{ for any } \z \in S,
	\end{align}
	where the inequalities are obtained by combining (\ref{eq:prelimDiscrete3}) and (\ref{eq:prelimDiscrete4}). The result follows from (\ref{eq:prelimDiscrete1}), (\ref{eq:prelimDiscrete5}) and Theorem \ref{th:christoffel}.
\end{proof}

\subsubsection{Proof of Theorem \ref{th:discretization}}.

\begin{proof}
	First let us consider measurability issues. Fix $n$ and $d$ such that $M_d(\mu)$ is invertible. Let $\X$ be a matrix in $\RR^{p\times n}$, we use the shorthand notation 
	\begin{align}
		\label{eq:discretization00}
		\Lambda_{\X,d}(\bz) = \min_{P \in \RR_d[\bx],\,P(\z)=1} \frac{1}{n}\sum_{i=1}^n P(\X_i)^2,
	\end{align}
	where for each $i$, $\X_i$ is the $i$-th column of the matrix $\X$. This corresponds to the empirical Christoffel function with input data given by the columns of $\X$. 
	Consider the function $F\colon \RR^{p\times n}\to [0,1]$ defined as follows:
	\begin{align}
		\label{eq:discretization0}
		F\colon \X \to \sup_{\z \in \RR^p} \left| \Lambda_{\mu,d}(\z) - \Lambda_{\X,d}(\z)\right|.
	\end{align}
It turns out that $F$ is a semi-algebraic function (its graph is a semi-algebraic set). Roughly speaking a set is semi-algebraic if it can be defined by finitely many polynomial inequalities. We refer the reader to \cite{coste2000introduction} for an introduction to semi-algebraic geometry, we mostly rely on content from Chapter 2. First, the function
	\begin{align*}
		(\X,\bz,P) \to \frac{1}{n}\sum_{i=1}^n P(\X_i)^2
	\end{align*}
	is semi-algebraic (by identifying the space of polynomials with the Euclidean space of their coefficients) and the set $\left\{ (P,\bz): P(\z)=1 \right\}$ is also semi-algebraic. Constrained partial minimization can be expressed by a first order formula, and, by Tarski-Seidenberg Theorem (see \textit{e.g.} \cite[Theorem 2.6]{coste2000introduction}), this operation preserves semi-algebraicity. Hence, the function $(\X,\bz) \to \Lambda_{\X,d}(\bz)$ is semi-algebraic. Furthermore, Theorem \ref{th:christoffel} ensures that $\Lambda_{\mu,d}(\bz) = 1/\kappa(\bz,\bz)$ for any $\bz$, where $\kappa(\bz,\bz)$ is a polynomial in $\bz$ and hence $\bz \to \Lambda_{\mu,d}(\bz)$ is semi-algebraic. Finally absolute value is semi-algebraic and using a partial minimization argument again, we have that $F$ is a semi-algebraic function. 
	 
	As a semi-algebraic function, $F$ is Borel measurable. Indeed, using the {\it good sets principle} (\cite{Ash} \S 1.5.1, p. 35) it is sufficient to prove that, for an arbitrary interval\footnote{Recall that the Borel $\sigma$-field $\mathcal{B}([0,1])$ is generated by the intervals $(a,b]$ of $[0,1]$; see \cite{Ash} \S 1.4.6, p. 27.} $(a,b]\subset [0,1]$, $F^{-1}((a,b])\in\mathcal{B}(\R^{p\times n})$. Any such set is the pre-image of a semi-algebraic set by a semi-algebraic map. As proved in \cite[Corollary 2.9]{coste2000introduction}, any such set must be semi-algebraic and hence measurable. Thus, with the notations of Theorem \ref{th:discretization}, $\|\Lambda_{\mu_n,d} - \Lambda_{\mu,d}\|_{\infty}$ is indeed a random variable for each fixed $n,d$ such that $M_d(\mu)$ is invertible.
	
	We now turn to the proof of the main result of the Theorem. For simplicity we adopt the following notation for the rest of the proof. For any continuous function $f: \RR^p \to \RR$, and any subset $V \subseteq \RR^p$, 
	\begin{align}
		\label{eq:discretization1}
		\|f\|_{V} &:= \sup_{\x \in V}\vert f(\x)\vert,\qquad \mbox{[ so that $\Vert f\Vert_{\R^p}=\Vert f\Vert_\infty$]},
	\end{align}
	which could be infinite. We prove that for any $\epsilon >0$,
	\begin{align}	
		\label{eq:discretization2}
		P\left( {\lim\sup}_n\left\{ \|\Lambda_{\mu_n,d} - \Lambda_{\mu,d}\|_{{\RR^p}} \geq \epsilon \right\} \right) = 0,
	\end{align}
	where the probability is taken with respect to the random choice of the sequence of independent samples from $\mu$ and the limit supremum is the set theoretic limit of the underlying events. 
	
	Fix $\epsilon >0$. Denote by $S$ the compact support of $\mu$. Note that $S$ contains also the support of $\mu_n$ with probability one. From Lemma \ref{lem:prelimDiscrete}, we have an upper bound on both $\Lambda_{\mu_n,d}$ and $\Lambda_{\mu,d}$ of order $O\left({\rm dist}(\x, {\rm conv}(S))^{-2} \right)$ which holds with probability one. Hence, it is possible to find a compact set $V_{\epsilon}$ containing $S$ (with complement $V_\epsilon^c=\R^n\setminus V_\epsilon$) 
		such that, almost surely, 
	\begin{align}
		\label{eq:discretization3}
		\max\left\{ \|\Lambda_{\mu_n,d}\|_{V_\epsilon^c}, \|\Lambda_{\mu,d}\|_{V_\epsilon^c} \right\} \leq \frac{\epsilon}{2}.
	\end{align}
	Next, we have the following equivalence 
	\begin{align}
		\label{eq:discretization4}
		\|\Lambda_{\mu_n,d} - \Lambda_{\mu,d}\|_{\RR^p} \geq \epsilon \quad \Leftrightarrow\quad\left\lbrace 
		\begin{array}{l}
			\|\Lambda_{\mu_n,d} - \Lambda_{\mu,d}\|_{V_\epsilon} \geq \epsilon \text{ or}\\
			\|\Lambda_{\mu_n,d} - \Lambda_{\mu,d}\|_{V_\epsilon^c}\geq \epsilon
		\end{array}
		\right.
	\end{align}
	On the other hand, since both functions are non negative, from equation (\ref{eq:discretization3}), almost surely,
	\begin{align}
		\label{eq:discretization5}
		\|\Lambda_{\mu_n,d} - \Lambda_{\mu,d}\|_{V_\epsilon^c} \leq \max\left\{ \|\Lambda_{\mu_n,d}\|_{V_\epsilon^c}, \|\Lambda_{\mu,d}\|_{V_\epsilon^c} \right\}\leq \frac{\epsilon}{2}.
	\end{align}
	Hence the second event in the right hand side of (\ref{eq:discretization4}) occurs with probability zero. As a consequence, except for a set of events of measure zero, we have
\[		\|\Lambda_{\mu_n,d} - \Lambda_{\mu,d}\|_{\RR^p} \geq \epsilon \Leftrightarrow \|\Lambda_{\mu_n,d} - \Lambda_{\mu,d}\|_{V_\epsilon} \geq \epsilon,\]
	which in turn implies 
	\begin{align}
		\label{eq:discretization7}
		P\left({\lim\sup}_n \left\{\|\Lambda_{\mu_n,d} - \Lambda_{\mu,d}\|_{\RR^p}  \right\} \geq \epsilon\right) = P\left({\lim\sup}_n \left\{\|\Lambda_{\mu_n,d} - \Lambda_{\mu,d}\|_{V_{\epsilon}}  \right\} \geq \epsilon\right).
	\end{align}
	By assumption the moment matrix $\M_d(\mu)$ is invertible and by the strong law of large numbers, almost surely, $\M_d(\mu_n)$ must be invertible for sufficiently large $n$. Assume that $\M_d(\mu_n)$ is invertible, we have
	\begin{align}
		\label{eq:discretization8}
		\|\Lambda_{\mu_n,d} - \Lambda_{\mu,d}\|_{V_{\epsilon}} &= \sup_{\x \in V_{\epsilon}} \left\{\left|\frac{1}{\v_d(\x)^T\M_d(\mu)^{-1}\v_d(\x)} - \frac{1}{\v_d(\x)^T\M_d(\mu_n)^{-1}\v_d(\x)}\right|\right\}\\
		&= \sup_{\x \in V_{\epsilon}} \left\{\left|\frac{\v_d(\x)^T(\M_d(\mu_n)^{-1}- \M_d(\mu)^{-1})\v_d(\x)}{\v_d(\x)^T\M_d(\mu)^{-1}\v_d(\x)\v_d(\x)^T\M_d(\mu_n)^{-1}\v_d(\x)  }\right|\right\}\nonumber.
	\end{align}
	Using the strong law of large numbers again, continuity of eigenvalues and the fact that for large enough $n$, $\M_d(\mu_n)$ is invertible with probability one, the continuous mapping theorem ensures that almost surely, for $n$ sufficiently large, the smallest eigenvalue of $\M_d(\mu_n)^{-1}$ is close to that of $\M_d(\mu)^{-1}$ and hence bounded away from zero. Since the first coordinate of $\v_d(\x)$ is $1$, the denominator in (\ref{eq:discretization8}) is bounded away from zero almost surely for sufficiently large $n$. In addition, since $V_\epsilon$ is compact, $\v_d(\x)$ is bounded on $V_\epsilon$ and there exists a constant $K$ such that, almost surely, for sufficiently large $n$,
	\begin{align}
		\label{eq:discretization9}
		\|\Lambda_{\mu_n,d} - \Lambda_{\mu,d}\|_{V_{\epsilon}} \leq K\|\M_d(\mu)^{-1} - \M_d(\mu_n)^{-1}\|,
	\end{align}
	where the matrix-norm in the right hand side is the operator norm induced by the Euclidean norm. Combining (\ref{eq:discretization7}) and (\ref{eq:discretization9}), we obtain
	\begin{align}	
		\label{eq:discretization10}
		&P\left({\lim\sup}_n \left\{\|\Lambda_{\mu_n,d} - \Lambda_{\mu,d}\|_{\RR^p}  \right\} \geq \epsilon\right) \nonumber\\
		\leq\;&P\left({\lim\sup}_n \left\{K\Vert\M_d(\mu)^{-1} - \M_d(\mu_n)^{-1}\Vert\geq \epsilon\right\}\right).
	\end{align}
	The strong law of large numbers and the continuity of the matrix inverse $\M_d(\cdot)^{-1}$ at $\mu$ ensure that the right hand side of (\ref{eq:discretization10}) is $0$. This concludes the proof.
\end{proof}

\newpage

\end{document}